%% file: main.tex
\documentclass{article} 
\usepackage{iclr2027_conference,times}

\input{math_commands.tex}

\usepackage{algorithm}
\usepackage{algpseudocode}
\usepackage{hyperref}
\usepackage{url}
\usepackage{booktabs,array}
\usepackage{graphicx}
\usepackage{subcaption}
\usepackage{float}
\usepackage{xcolor}

\title{
    Bilinear World Models: {\fontsize{14}{10}\selectfont
    Learning Representations with Structured Dynamics for Efficient Control}
}

\author{Antonio Pariente$^*$, \;
Ignacio Boero\thanks{Equal contribution.}, \;
Nikolai Matni \&
Alejandro Ribeiro  \\
University of Pennsylvania\\
\texttt{\{pariente,iboero,nmatni,aribeiro\}@engineering.upenn.edu}}

\iclrfinalcopy 
\begin{document}

\maketitle
\lhead{Preprint.}

\begin{abstract}
World models jointly learn latent representations and dynamics that predict how high-dimensional observations evolve under actions.
In this work, we propose a JEPA-style world model in which, rather than learning arbitrary latent dynamics, we restrict them to follow a bilinear parameterization.
This structure enables efficient planning and control while shifting the modeling burden onto the encoder, encouraging richer representations that expose the controllable geometry of the system.
In particular, this structured parameterization allows us to structurally enforce action recoverability, thereby preventing representation collapse by construction.
Although prescribing a bilinear parametrization may appear restrictive, we show that a broad class of nonlinear dynamical systems admits a transformation under which the dynamics become bilinear.
Empirically, we show across standard 2D and 3D control tasks that representations with bilinear-parameterized dynamics can be learned directly from high-dimensional observations, reducing planning time by nearly three orders of magnitude while retaining or even improving control accuracy.
We also propose more demanding regimes of longer-horizon planning and real-time control, and demonstrate that our method succeeds in both, moving JEPA-style world models beyond short-horizon offline planning.
\end{abstract}

\input{01_introduction/input}
\input{02_problem_formulation/input}

\input{03_koopman_theory/input}
\input{04_control/input}

\input{05_experiments/input}

\input{statements}

\bibliography{References}
\bibliographystyle{iclr2027_conference}

\appendix
\input{06_appendix/input}

\end{document}

%% file: math_commands.tex
\usepackage{amsthm}

\newtheorem{proposition}{Proposition}

\newtheorem{assumption}{Assumption}

\newtheorem{lemma}{Lemma}

\usepackage{amsmath,amsfonts,bm}

\def\eqref#1{equation~\ref{#1}}

\def\1{\bm{1}}

\DeclareMathAlphabet{\mathsfit}{\encodingdefault}{\sfdefault}{m}{sl}
\SetMathAlphabet{\mathsfit}{bold}{\encodingdefault}{\sfdefault}{bx}{n}



%% file: 01_introduction/input.tex
\section{Introduction}
\label{sec:intro}

\input{01_introduction/a_intro}

\input{01_introduction/b_related}

%% file: 01_introduction/a_intro.tex
World models learn a latent representation of an environment together with dynamics that predict how it evolves under actions. This allows planning from high-dimensional observations, such as images, to be performed in a lower-dimensional latent space. Modern approaches typically use neural networks to learn both the encoder that maps observations to latent states and the dynamical model that predicts their evolution. We argue that such flexibility in the dynamical model is often unnecessary and counterproductive for planning and control, motivating structured dynamics.

\begin{itemize}\item[(C1)]We propose to learn world models with structured, bilinear-parameterized latent dynamics.\end{itemize}

Rather than learning arbitrary latent dynamics, we restrict their state dependence through a bilinear parameterization followed by a structured normalization. This requires the encoder to discover a representation that satisfies this structure, thus shifting expressivity from the dynamical predictor to the encoder itself. An immediate consequence is that actions can be recovered from latent transitions, thus preventing representation collapse by construction. When choosing the structure to impose, it is desired to have dynamics that enable efficient planning and control while remaining expressive enough to represent a broad class of systems under an appropriate transformation. A bilinear parameterization satisfies both requirements.

\begin{itemize}\item[(C2)] We show that bilinear latent dynamics are expressive for a broad class of nonlinear systems and that the resulting structured parameterization enables efficient control.\end{itemize}

To establish expressiveness, we leverage Koopman theory and show that any action-affine nonlinear system admits a transformation under which its dynamics become bilinear. This shows that fixing bilinear dynamics still allows a broad class of nonlinear systems to be represented. The same structure is then exploited for control by replacing computationally expensive zeroth-order methods such as CEM with a gradient-based planner that directly uses the sensitivity of the predicted endpoint to the action sequence. This enables substantially more efficient optimization in the latent space while preserving strong control performance. Both advantages are reflected directly in our experiments.

\begin{itemize}\item[(C3)] Bilinear World Models retain or improve control performance while reducing planning time by three orders of magnitude. \end{itemize}

We evaluate Bilinear World Models on four standard 2D and 3D control tasks---TwoRoom, PushT, Reacher, and Cube---against state-of-the-art JEPA-style world models. Despite using substantially more restricted dynamics, our models match or improve success rate across the tasks. At the same time, exploiting the structured, differentiable dynamics reduces planning time by approximately three orders of magnitude, reducing several minutes of planning to seconds. This gain in efficiency enables evaluation on more demanding control regimes.

\begin{itemize}\item[(C4)] We extend world models beyond short-horizon offline planning to longer-horizon and real-time control.\end{itemize}

We evaluate progressively longer planning horizons and moving-goal environments that require continuous replanning. Bilinear World Models degrade less as the horizon increases while their planning cost scales substantially better than sampling-based alternatives. Their low control cost also allows the same controller to operate in real time, whereas competing world models require a specialized real-time sampling method. Across the moving-goal tasks, Bilinear World Models achieve substantially higher success, showing that structured latent dynamics enable world models to operate beyond the short-horizon offline regime in which they are typically evaluated.

\begin{figure}[t]
    \centering
    \includegraphics[width=\linewidth]{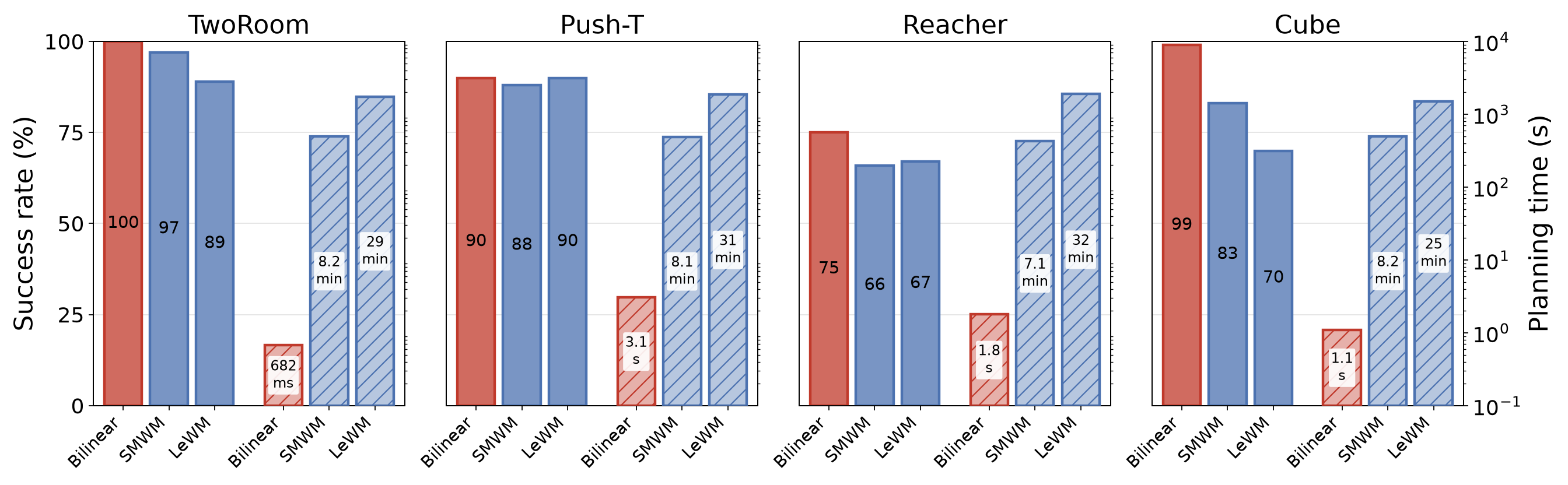}
    \caption{\textbf{Planning success and time across four benchmarks.} Solid bars report success rate over 100 evaluation episodes, each defined by a start--goal pair with a goal offset of 25 steps; hatched bars report the average total wall-clock planning time per episode on a log scale. \textit{Our Bilinear World Model matches or improves success across all environments while reducing planning time by two to three orders of magnitude.}}
    \label{fig:planning_time}
\end{figure}

%% file: 01_introduction/b_related.tex
\section{Related Work}

\paragraph{Latent World models.} Given a dataset of observed transitions generated by an unknown dynamical system, world models aim to jointly learn a latent representation and latent dynamics that capture the behavior of the underlying system~\citep{wm}. Initial approaches, such as PlaNet~\citep{planet} and Dreamer~\citep{hafner2020dreamcontrollearningbehaviors}, relied on encoder--decoder architectures, where pixel reconstruction losses encouraged representations that retained information about the environment. More recently, Joint-Embedding Predictive Architectures (JEPAs)~\citep{assran2023selfsupervisedlearningimagesjointembedding, chen2026vljepajointembeddingpredictive} have provided an alternative that learns latent representations without relying on reconstruction losses. Despite these differences, both paradigms typically model the latent dynamics with flexible neural network predictors.We propose to constrain the latent dynamics through a bilinear parameterization and learn the representation under this structural restriction.


\paragraph{JEPA world models and representation collapse.} A challenge in jointly learning flexible latent representations and latent dynamics is the existence of trivial solutions in which both the encoder and the dynamics predictor collapse to a constant representation~\citep{balestriero2025lejepaprovablescalableselfsupervised}. This phenomenon, known as \textit{representation collapse}, is a central limitation of JEPA-based world models. Early approaches sought to prevent collapse by regularizing the distribution of the latent representations~\citep{leworld,vjepa}. More recently, Sensorimotor World Models (SMWM)~\citep{sensimotor} introduced an auxiliary inverse model that predicts the action from latent transitions. Because the action cannot be recovered from a collapsed representation, this auxiliary objective discourages collapse. In our setting, using a bilinear parameterization of the latent dynamics allows action recoverability to be enforced directly through the model structure, thereby preventing representation collapse by construction.

\paragraph{Bilinear dynamical systems.}
Bilinear systems constitute a classical class of nonlinear control systems in which the dynamics are linear in the state and in the action independently, but contain multiplicative state--action interactions~\citep{1974bilinear}. Their structured form has enabled the development of specialized tools for controllability, stabilization, optimal control, and trajectory optimization, while retaining substantially more expressive dynamics than linear models~\citep{elliott2007bilinear}. More recently, bilinear models have also emerged in data-driven control as structured approximations of nonlinear dynamical systems~\citep{otto2024learning}. In particular, for control-affine systems, the corresponding Koopman generator inherits an affine dependence on the control, naturally giving rise to bilinear dynamics in lifted coordinates~\citep{koopman-adv}. 

\paragraph{Koopman embeddings.}
Koopman operator theory provides a principled framework for representing nonlinear dynamical systems through linear evolution in a lifted, potentially infinite-dimensional space of observables~\citep{koopman,Mezi2005SpectralPO}. This perspective has motivated data-driven methods that approximate the Koopman operator on finite-dimensional observables~\citep{williams2015data}. Deep Koopman methods extend this idea by learning the observables themselves with neural networks, seeking latent representations in which the dynamics are approximately linear~\citep{Lusch_2018, otto}. These methods are often developed from state measurements, while approaches operating on high-dimensional observations commonly employ encoder--decoder objectives~\citep{9636408}. In contrast, we learn structured latent dynamics directly from high-dimensional observations using a JEPA objective, without requiring input reconstruction.

%% file: 02_problem_formulation/input.tex
\section{Bilinear World Models}
\label{sec:embeddings}

\input{02_problem_formulation/a_pf}

%% file: 02_problem_formulation/a_pf.tex
Let $F_s: \mathcal{S} \times \mathcal{A} \to \mathcal{S}$ be an \textit{unknown} discrete-time dynamical system, where a current state $s_t \in \mathcal{S} \subseteq \mathbb{R}^n$ and an action $a_t \in \mathcal{A} \subseteq \mathbb{R}^{m}$ determine the state evolution
$s_{t+1} = F_s(s_t,a_t)$.
In this work, we seek an encoder $\phi:\mathcal{S}\to\mathcal{Z}\subseteq\mathbb{R}^{d}$ that maps each state $s\in\mathcal{S}$ to a latent representation $z=\phi(s)\in\mathcal{Z}$ in which the unknown dynamics $F_s$ admit a simpler structured representation. In particular, we begin from latent dynamics with a bilinear parameterization,
\begin{align}
\label{eq:embedding}
    z_{t+1} = Az_t + \widetilde M(z_t) a_t, \quad \widetilde M(z_t):= B+Cz_t\footnotemark{}.
\end{align}

\footnotetext{We note that $C \in \mathbb{R}^{d\times m \times d}$ collects $d$ matrices $C_1,\dots,C_d$, with $Cz := \sum_{i=1}^d C_i z_i \in \mathbb{R}^{d\times m}$.}

Restricting the structure of the latent dynamics shifts the modeling burden onto the encoder, which must learn a representation in which the prescribed dynamics are sufficiently expressive. In particular, if $\widetilde M(z_t)^\top\widetilde M(z_t)$ is positive definite, the action that generated a latent transition can be recovered as
\begin{align}
\label{eq:action}
a_t= \widetilde M(z_t)^\dagger(z_{t+1}-Az_t), \quad \widetilde M(z)^\dagger:=
\big(\widetilde M(z)^{\top}\widetilde M(z)\big)^{-1}\widetilde M(z)^\top.
\end{align}
%
Encouraging recoverability of the action from the state transition has previously been proposed as a mechanism for preventing \textit{representation collapse}~\citep{sensimotor}. Unlike SMWM, which learns an unconstrained inverse network, our dynamics admit a closed-form inverse whenever the structured action map is full rank.  A convenient way to enforce this condition in practice is to normalize the matrix $\widetilde M(z_t)$. We therefore consider the normalized latent dynamics
\begin{align}
\label{eq:normalized_embedding}
    z_{t+1} = Az_t + M(z_t) a_t, \quad
    M(z)=Q(z)R,
    \qquad
    Q(z)=\mathrm{CholQR}\!\left(\widetilde M(z)\right),
\end{align}
where $Q(z)\in\mathbb{R}^{d\times m}$ is the orthogonal factor obtained through the Cholesky--QR decomposition and $R\in\mathbb{R}^{m\times m}$ is a learned state-independent matrix. We emphasize that the resulting dynamics are no longer strictly bilinear, since the normalization introduces a nonlinear dependence on $z$. However, their state dependence remains generated by the bilinear parameterization $\widetilde M(z)=B+Cz$, which is subsequently normalized before entering the dynamics.

As before, action recoverability requires $M(z_t)^\top M(z_t)$ to be positive definite. Under the normalized parameterization, this condition becomes independent of the latent state and can be controlled directly through the state-independent matrix $R$,
\begin{equation}
\begin{aligned}
    M(z)^\top M(z)
    &= R^\top Q(z)^\top Q(z)R
    = R^\top R.
\end{aligned}
\label{eq:normalized_gram}
\end{equation}

Two practical considerations must be addressed before formulating the problem. First, optimizing the encoder over the unrestricted space of functions $\phi:\mathcal{S}\to\mathcal{Z}$ is intractable. We therefore restrict it to a finitely parameterized hypothesis class $\phi_\theta\in\mathcal{H}$, with $\theta\in\Theta\subseteq\mathbb{R}^p$. Second, the underlying dynamical system $F_s$ is unknown and is observed only through a finite collection of transitions
$\Omega=\{(s_t,a_t,s_{t+1})_i\}_{i=1}^N$.
Accordingly, we propose to solve the following optimization problem:
\begin{equation}
\begin{aligned}
\underset{\phi_\theta,A,B,C,R}{\mathrm{minimize}}\quad
& \frac{1}{N}\sum
\left\|
z_{t+1}-Az_t-M(z_t)a_t
\right\|^2 \\
\mathrm{subject~to}\quad
& R^\top R \succeq \epsilon I, \\
& M(z_t)=\mathrm{CholQR}\!\left(B+Cz_t\right)R, \\
& z_t=\phi_\theta(s_t), \qquad
z_{t+1}=\phi_\theta(s_{t+1}),
\end{aligned}
\label{eq:problem}
\tag{$\textup{P}$}
\end{equation}

The problem formulation in~\eqref{eq:problem} jointly learns a latent representation of the environment and its dynamics, and can therefore be viewed as a world-model learning problem. Since the model is trained entirely in latent space, without reconstructing the original observations through a decoder, it follows a JEPA-style formulation.
However, unlike most latent world-model approaches which parameterize the dynamics with an unconstrained neural network~\citep{sensimotor,leworld}, we explicitly restrict the latent dynamics to follow a bilinear parameterization. We remark that the QR decomposition is differentiable, and therefore gradients can be backpropagated through this normalization. Implementation details for solving~\ref{eq:problem}, together with an extended discussion of the relationship between the bilinear and normalized dynamics, are provided in Appendix~\ref{app:sec:problem}.

%% file: 03_koopman_theory/input.tex
\section{Existence of Action-Recoverable Bilinear Embeddings}
\label{sec:manifold}

\input{03_koopman_theory/a_tangent}

%% file: 03_koopman_theory/a_tangent.tex
Restricting the latent dynamics to a bilinear form raises the question of whether such representations are sufficiently expressive. In this section, we characterize conditions under which a nonlinear controlled system admits a possibly infinite-dimensional bilinear embedding. Building on Koopman realizations of control-affine systems, we further show that pointwise independent action directions yield a full-rank latent action map, and therefore action recoverability.

We first state the Koopman linear representation theorem, which ensures that for any dynamical system $F$ and a fixed action $a \in \mathcal{A}$, there exists an embedding where the resulting non actuated dynamics are linear in $z$.

\begin{lemma}[Universal Koopman linear representation]
\label{thm:universal_koopman_representation}
Let $F:\mathcal S\times\mathcal A\to\mathcal S$ be a deterministic continuous-time
controlled dynamical system. For a fixed action $a\in\mathcal A$, there exist a
(possibly infinite-dimensional) vector space $\mathcal Z$, an injective embedding
$E:\mathcal S\to\mathcal Z$, and a linear operator $K_a:\mathcal Z\to\mathcal Z$
such that $z=E(s)$ evolves according to
\begin{equation}
\label{eq:koopman_embedding_dynamics}
\dot z = K_a z.
\end{equation}
\end{lemma}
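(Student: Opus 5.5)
The construction is the classical Koopman lift. Fix $a\in\mathcal A$ and write the continuous-time system as $\dot s = F(s,a)$. I assume, as is implicit in the statement, that for this fixed action the vector field generates a well-defined flow $\varphi^t_a:\mathcal S\to\mathcal S$ for all $t\ge 0$: it is forward complete with $\mathcal S$ invariant, and smooth enough (say $C^1$) that trajectories are differentiable in $t$. Take the space of observables to be $\mathcal Z:=\{g:\mathcal S\to\mathbb R\}^{*}$-type objects. Concretely, the cleanest choice is to let $\mathcal Z$ be the space of all real-valued functions on $\mathcal S$ (or $C^1(\mathcal S)$), viewed through evaluation.

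The embedding is $E(s):=\delta_s$, the evaluation functional $g\mapsto g(s)$. It lives in the algebraic dual $\mathcal Z$ of $\mathcal G:=C^1(\mathcal S)$. Injectivity is immediate: if $s\neq s'$, the coordinate functions $g(x)=x_i$ belong to $\mathcal G$ and separate $s$ from $s'$, so $\delta_s\neq\delta_{s'}$.

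The linear operator comes from the Koopman semigroup $(U^t g)(s)=g(\varphi^t_a(s))$ on $\mathcal G$. Its generator is the Lie derivative $L_a g := \nabla g\cdot F(\cdot,a)$. The key computation is that for every $g\in\mathcal G$,
\[
\frac{d}{dt}\, g(s(t)) = \nabla g(s(t))\cdot F(s(t),a) = (L_a g)(s(t)).
\]
Define $K_a:=L_a^{*}$ on the dual, by $(K_a z)(g):=z(L_a g)$; it is linear. Then $z(t)=\delta_{s(t)}$ satisfies $\dot z(t)(g)=\tfrac{d}{dt}g(s(t)) = z(t)(L_a g)=(K_a z(t))(g)$ for every $g$. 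This is exactly $\dot z = K_a z$, understood weakly, i.e., pairing against each observable.

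The main obstacle is that $L_a$ does not map $C^1$ into $C^1$ unless $F$ and $g$ are smoother. To fix this, I would take $\mathcal G=C^\infty(\mathcal S)$ and assume $F(\cdot,a)$ is $C^\infty$; then $L_a:\mathcal G\to\mathcal G$ is well defined and the argument goes through verbatim. If only a $C^1$ vector field is available, I would instead fall back on the discrete-time semigroup $U^t$, which needs no smoothness. Alternatively, one could use the equivalent ``vector of all observables'' picture, $z=(g(s))_{g\in\mathcal G}\in\mathbb R^{\mathcal G}$, with $(K_a z)_g = z_{L_a g}$. That picture makes the linearity and the infinite dimensionality explicit.
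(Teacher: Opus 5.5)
Your proposal is correct and is an explicit version of what the paper does. The paper's proof only points to the classical Koopman construction: lift $s\mapsto\delta_s$ into the algebraic dual of a space of observables and take $K_a$ to be the adjoint of the Lie derivative $g\mapsto\nabla g\cdot F(\cdot,a)$, so that $\dot z=K_az$ holds weakly against every observable.

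The $C^\infty$ hypothesis you add, so that $L_a$ maps observables to observables, is a reasonable reading of this informally stated lemma. However, your closing fallback to the semigroup $U^t$ for merely $C^1$ fields only gives the integrated linear evolution $z(t)=(U^t)^{*}z(0)$, not a generator equation $\dot z=K_az$. It should therefore not be presented as proving the lemma in that case.
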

\begin{proof}
This is the classical Koopman construction; see for example
\cite[Sec.~1]{bamieh2022shortintroductionkoopmanrepresentation}.
\end{proof}

Lemma \ref{thm:universal_koopman_representation} ensures the existence of a transformation into an embedding space where dynamics are linear in $z$. 
However, we note that the linear operator $K_a$ is not necessarily affine on $a$. To guarantee that we require the original system to be action affine and a mild structural requirement. Moreover, to ensure injectiveness of the control operator we require that the effect actions have in the system for a particular state $s$ are nondegenerate.
 
\begin{assumption}[Affinity in the actions and Lipchitzness]
\label{ass:action_affine}
The dynamical system $F_s$ is \emph{affine in the actions}. That is, there exist functions
$f_0,f_1,\ldots,f_m$ such that
\begin{equation}
\label{eq:action_affine}
    F_s(s,a)
    =
    f_0(s)
    +
    \sum_{j=1}^{m} a_j f_j(s).
\end{equation}
Moreover, the system is Lipchitz if the functions $f_0,f_1,\ldots,f_m$ are Lipchitz. 
\end{assumption}

\begin{assumption}[Independent action directions]
\label{ass:independent_actions}
The actions induce linearly independent directions in the system dynamics.

In particular, for a system affine in the actions
(Assumption~\ref{ass:action_affine}) that requieres, 
\begin{equation}
\label{eq:independent_action_directions}
    \operatorname{rank}
    \begin{bmatrix}
        f_1(s) & \cdots & f_m(s)
    \end{bmatrix}
    = m,
    \qquad
    \forall s\in\mathcal S.
\end{equation}
\end{assumption}
Affinity in the actions is a common structure in mechanical systems when the actions correspond to generalized forces or torques, rather than to lower-level actuator signals. 
%
Requiring the dynamics to be Lipchitz continuous requires that nearby states can not induce arbitrarily different dynamics. Nontheless, it does not require differentiability and therefore still allows for nonsmooth behaviors, such as those arising from contact interactions.
Assumption~\ref{ass:independent_actions} requires distinct actions to induce  distinct direction of motion, at least locally. In particular,
no action direction can be reproduced as a linear combination of the
effects induced by the remaining actions.
Then, whenever the system satisfy this conditions, there exist a transformation were the embeddings follow bilinear dynamics. 

\begin{proposition}[Action-recoverable bilinear Koopman embedding]
\label{prop:bilinear_embedding}
Let Assumptions~\ref{ass:action_affine} and
\ref{ass:independent_actions} hold. Then there exist a possibly
infinite-dimensional vector space $\mathcal Z$, an injective embedding
$E:\mathcal S\rightarrow\mathcal Z$, and linear operators
$
A:\mathcal Z\to\mathcal Z$,
$B:\mathbb R^m\to\mathcal Z$ and 
$C:\mathcal Z\to\mathcal L(\mathbb R^m,\mathcal Z),
$
such that, for $z=E(s)$,
\begin{equation}
    \dot z
    =
    Az + M(z)a,
    \qquad
    M(z)=B+Cz.
\end{equation}
Moreover, the latent action map $M(z)$ is injective for every
$z\in E(\mathcal S)$. Equivalently,
\begin{equation}
    M(z)^TM(z)\succ 0,
    \qquad
    \forall z\in E(\mathcal S).
\end{equation}
\end{proposition}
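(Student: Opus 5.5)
The plan is to build the embedding as a Koopman lift on a space of observables, in the spirit of Lemma~\ref{thm:universal_koopman_representation}, and to use Assumption~\ref{ass:action_affine} to show the lifted generator is affine in $a$. Treat the system in continuous time, $\dot s = f_0(s)+\sum_j a_j f_j(s)$, consistent with Lemma~\ref{thm:universal_koopman_representation}. Lipschitzness gives well-posed flows for constant actions.

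Let $\mathcal Z$ be a suitable linear space of functions $g:\mathcal S\to\mathbb R$ that is closed under the Lie derivatives $L_{f_j}g=\nabla g\cdot f_j$. For example, take the smallest such space containing the coordinate functions, possibly completed. Define $E(s)$ as the evaluation functional $g\mapsto g(s)$, viewed as an element of the algebraic dual $\mathcal Z^*$; this dual is the latent space. Since $\mathcal Z$ contains the coordinates $s_i$, $E$ is injective.

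Along trajectories, $\frac{d}{dt}g(s(t))=(L_{f_0}g)(s)+\sum_j a_j (L_{f_j}g)(s)$. In the dual picture this reads
\[
\dot z = L_{f_0}^* z+\sum_j a_j L_{f_j}^* z .
\]
Set $A=L_{f_0}^*$, $B=0$, and $(Cz)e_j=L_{f_j}^*z$. This gives $\dot z = Az+(B+Cz)a$ with all operators linear, which establishes the bilinear form. If one wants a nonzero $B$, augment $\mathcal Z$ with constants; this does not change the argument.

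The main obstacle is injectivity of $M(z)$ for $z=E(s)$. Suppose $M(E(s))a=0$ for some $a$. Pairing with every observable $g$ gives $\sum_j a_j \nabla g(s)\cdot f_j(s)=0$ for all $g\in\mathcal Z$. Choosing $g=s_i$ for each coordinate yields $\sum_j a_j f_j(s)=0$. By Assumption~\ref{ass:independent_actions}, the matrix $[f_1(s)\ \cdots\ f_m(s)]$ has rank $m$, so $a=0$. Hence $M(z)$ is injective on $E(\mathcal S)$, and $M(z)^\top M(z)\succ0$, where the transpose is interpreted through the coordinate observables. Equivalently, the finite block of $z$ given by $s$ itself already carries the full-rank map $[f_1\ \cdots\ f_m]$.

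Two technical points need care. First, the Lie derivative requires differentiability, while Lipschitz $f_j$ only give differentiability almost everywhere. I would avoid this by defining $C$ through the weak, semigroup-generator sense. Alternatively, I would note that for the coordinate observables the derivative $\frac{d}{dt}s=f_0+\sum_j a_j f_j$ holds directly, so the injectivity argument needs no smoothness. Second, closure of $\mathcal Z$ under the $L_{f_j}$ is imposed by construction, by taking the generated space.
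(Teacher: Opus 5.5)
Your proposal is correct and follows the paper's proof: you use a Koopman bilinear lift whose observables include the state coordinates, which makes $E$ injective. You then pair $M(E(s))v=0$ with the coordinate observables. This is exactly the paper's decoding operator $D$ with $DM(E(s))=[f_1\ \cdots\ f_m]$, and it reduces injectivity to Assumption~\ref{ass:independent_actions}.

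The only difference is that you build the lift explicitly, as adjoints of Lie derivatives on a Lie-derivative-closed observable space, where the paper cites the control-affine Koopman corollary. This makes your argument self-contained and gives $DM(E(s))$ directly rather than by matching coefficients. As you note, though, that construction needs more smoothness than Lipschitz $f_j$ provide, a point the paper's citation also glosses over.
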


\begin{proof}
    See Appendix~\ref{apx:proof_bilinear_thm}.
\end{proof}

Proposition~\ref{prop:bilinear_embedding} shows that the two structural properties required by our model---bilinear latent dynamics and action recoverability---can hold simultaneously for a broad class of nonlinear controlled systems. Existing Koopman results provide possibly infinite-dimensional bilinear realizations for control-affine systems, while the additional independence assumption on the physical action directions guarantees that the corresponding latent action map is injective, and therefore that actions can be recovered from latent transitions. The result is only an expressiveness argument, as it applies to exact continuous-time, possibly infinite-dimensional representations, whereas Problem~\ref{eq:problem} learns a finite-dimensional, discrete-time, normalized approximation. Prior work nevertheless provides empirical evidence that finite-dimensional neural embeddings can approximate useful Koopman representations across a range of dynamical systems~\citep{Lusch_2018,otto2024learning,9636408}.

%% file: 04_control/input.tex
\section{Control in Normalized Bilinear Systems}
\label{sec:control}

\input{04_control/a_control}

%% file: 04_control/a_control.tex
A useful world model should not only predict the evolution of a dynamical system, but also enable efficient planning and control between states. In particular, given an initial state $z_0$ and a goal $z^\star$, we seek a sequence of actions $\mathbf{a}=(a_0,\dots,a_{H-1})$ that drives the system toward $z^\star$ over a planning horizon $H$. The structured dynamics introduced in~\eqref{eq:normalized_embedding} make this optimization particularly efficient.

To formalize this objective, let $z_H(\mathbf{a})$ denote the endpoint obtained by recursively applying the learned dynamics under the action sequence $\mathbf{a}$. We optimize
\begin{align}
\label{eq:planning-objective}
\mathcal L(\mathbf{a})
=
\frac{1}{2}\|z_H(\mathbf{a})-z^\star\|^2
+
\frac{c}{2}\|\mathbf{a}\|^2.
\end{align}

The first term measures the error in reaching the goal, while the second regularizes the magnitude of the actions. The objective in~\eqref{eq:planning-objective} does not rely on the particular structure of the dynamics and can therefore be optimized with general black-box world models using sampling-based planners such as the Cross-Entropy Method (CEM). CEM repeatedly samples candidate action sequences, evaluates their costs through model rollouts, and updates the sampling distribution toward the best-performing candidates. Because this procedure does not exploit sensitivity information, it can require a large number of model evaluations, making planning computationally expensive, particularly as the planning horizon or action dimension increases.

The advantage of the normalized bilinear parameterization is that the learned dynamics remain explicitly differentiable with respect to both states and actions, allowing the same objective to be optimized using sensitivity information. For the dynamics
$z_{t+1}=Az_t+M(z_t)a_t$, define the state Jacobian
$F_t:=\partial z_{t+1}/\partial z_t
=A+\partial(M(z_t)a_t)/\partial z_t$.
The Jacobian of the final state with respect to the complete action sequence then takes the form
\begin{align}
\label{eq:control-jacobian}
\mathcal J(\mathbf{a})
:=
\frac{\partial z_H(\mathbf{a})}{\partial \mathbf{a}}
=
\left[
\left(\prod_{j=k+1}^{H-1}F_j\right)M(z_k)
\right]_{k=0}^{H-1}.
\end{align}

The Jacobian in~\eqref{eq:control-jacobian} characterizes how local perturbations of each action affect the final predicted state. Because $M(z)$ is obtained through differentiable operations, including the Cholesky--QR normalization, these sensitivities can be computed efficiently by automatic differentiation. We can therefore use them to construct a local linear approximation of the endpoint as a function of the action sequence. Substituting this approximation into~\eqref{eq:planning-objective} and minimizing the resulting quadratic model gives the update
\begin{align}
\label{eq:gn-control}
\mathbf{a}^{t+1}
=
\mathbf{a}^{t}
-
\left[
\mathcal J(\mathbf{a}^{t})^\top \mathcal J(\mathbf{a}^{t})
+
cI
\right]^{-1}
\left[
\mathcal J(\mathbf{a}^{t})^\top
\big(z_H(\mathbf{a}^{t})-z^\star\big)
+
c\mathbf{a}^{t}
\right].
\end{align}

Applying~\eqref{eq:gn-control} iteratively yields a Gauss--Newton planner~\cite{gn}. In practice, several inexpensive modifications improve its robustness, including initializing the action sequence from a small set of candidates, using backtracking line search to avoid overshooting, and adding damping to the linear system for numerical stability. We additionally exploit the structure of the action space and compute the required sensitivities efficiently in batch. Complete implementation details are provided in Appendix~\ref{app:sec:experients}.

%% file: 05_experiments/input.tex
\section{Numerical Experiments}
\label{sec:experiments}

\input{05_experiments/exps}

%% file: 05_experiments/exps.tex
In this section,we empirically validate the preceding claims by studying the representations learned under the normalized bilinear parameterization and evaluating their implications for control. We first analyze the geometry of the learned latent space, then compare control performance and planning efficiency against existing JEPA-style world models, and finally evaluate longer-horizon and real-time control regimes.

\textbf{Tasks.} We evaluate our framework on the standard LeWM benchmark suite~\citep{leworld}, which includes \textit{Two Room} for 2D navigation, \textit{Cube} for 3D manipulation, \textit{Reacher} for continuous control, and \textit{PushT} for contact-rich object interaction. We measure performance using the \textit{Success Rate}, defined as the percentage of 100 start--goal tasks successfully completed within the allowed action budget. We additionally report the planning time $t$, measured as the computation required by the controller to obtain the action sequence for each task.

\textbf{Benchmarks.} We compare against LeWM~\citep{leworld} and Sensorimotor~\citep{sensimotor}. We focus on latent world models rather than task-specific controllers because our goal is to evaluate whether structuring the latent dynamics preserves control performance while reducing planning cost within this flexible framework, rather than to develop specialized controllers for the four environments considered here. We select LeWM and Sensorimotor because they are the strongest world-model baselines we identified in the literature for these benchmarks.

\textbf{Architecture.} We adopt the encoder used in LeWM~\citep{leworld}, a ViT-Tiny with 5M parameters that maps each state to a 192-dimensional latent representation. The predictor follows~\eqref{eq:normalized_embedding} and is therefore parameterized only by the four matrices $A$, $B$, $C$ and $R$.

\textbf{Controllers.}
For LeWM and Sensorimotor, the main benchmark and long-horizon experiments use CEM~\citep{cem}, matching the planner used in their original evaluations. For our Bilinear World Model, we use the Gauss--Newton planner introduced in Section~\ref{sec:control}. We additionally evaluate CEM, iCEM~\citep{icem}, and Gauss--Newton across all three world models in Appendix~\ref{app:sec:controller_comparison} to separate the effect of model structure from that of the controller. In the real-time experiments proposed in this section, CEM is too slow to meet the prescribed control budget, and we therefore use iCEM for the neural-dynamics baselines while retaining the same Gauss--Newton controller for our method.

\begin{figure}[t]
    \centering
    \begin{subfigure}{0.195\textwidth}
        \centering
        \includegraphics[width=\linewidth]{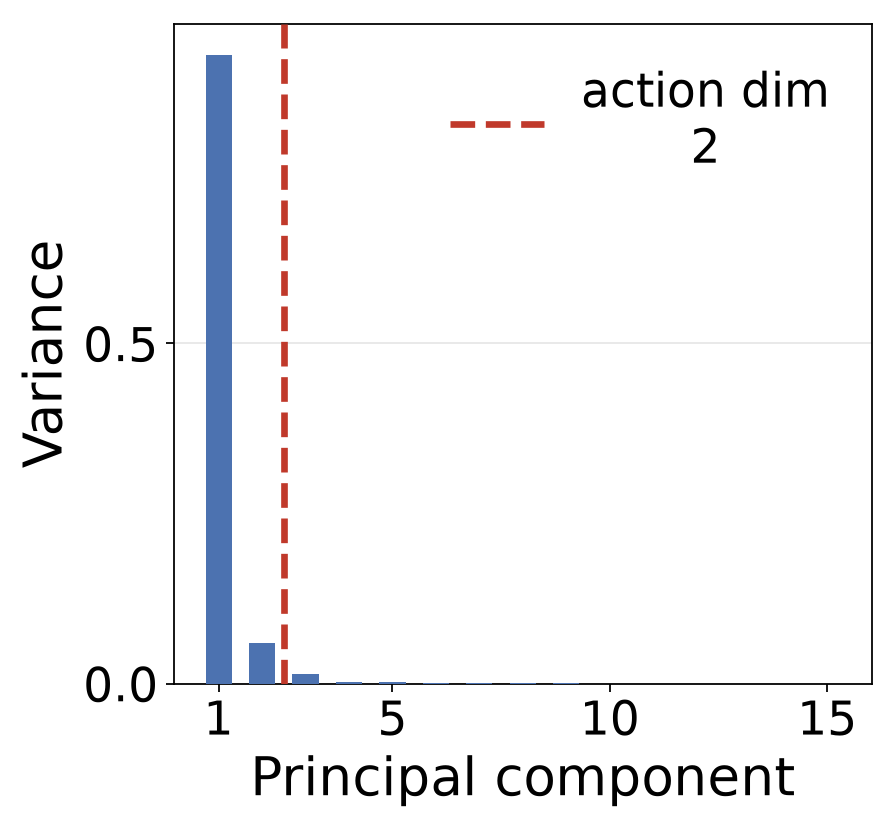}\\[0.5em]
        \includegraphics[width=\linewidth]{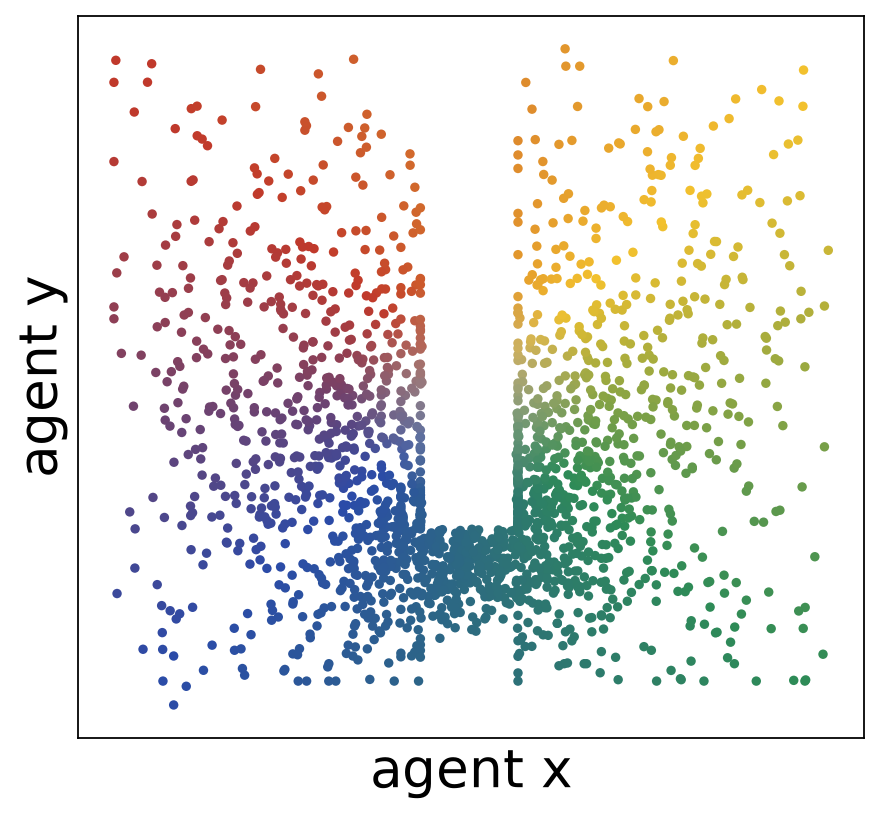}\\[0.5em]
        \includegraphics[width=\linewidth]{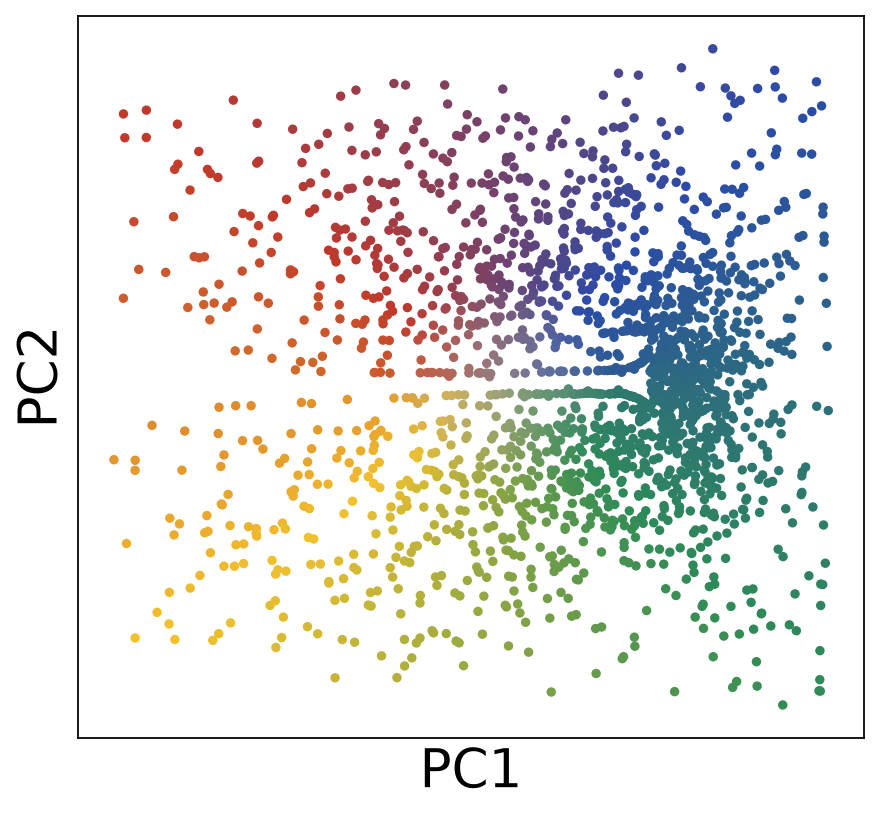}\\[0.5em]
        \includegraphics[width=\linewidth]{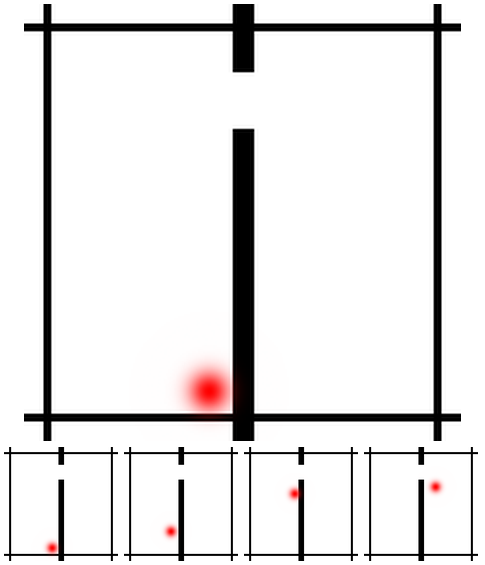}
        \caption{TwoRoom}
    \end{subfigure}
    \hfill
    \begin{subfigure}{0.195\textwidth}
        \centering
        \includegraphics[width=\linewidth]{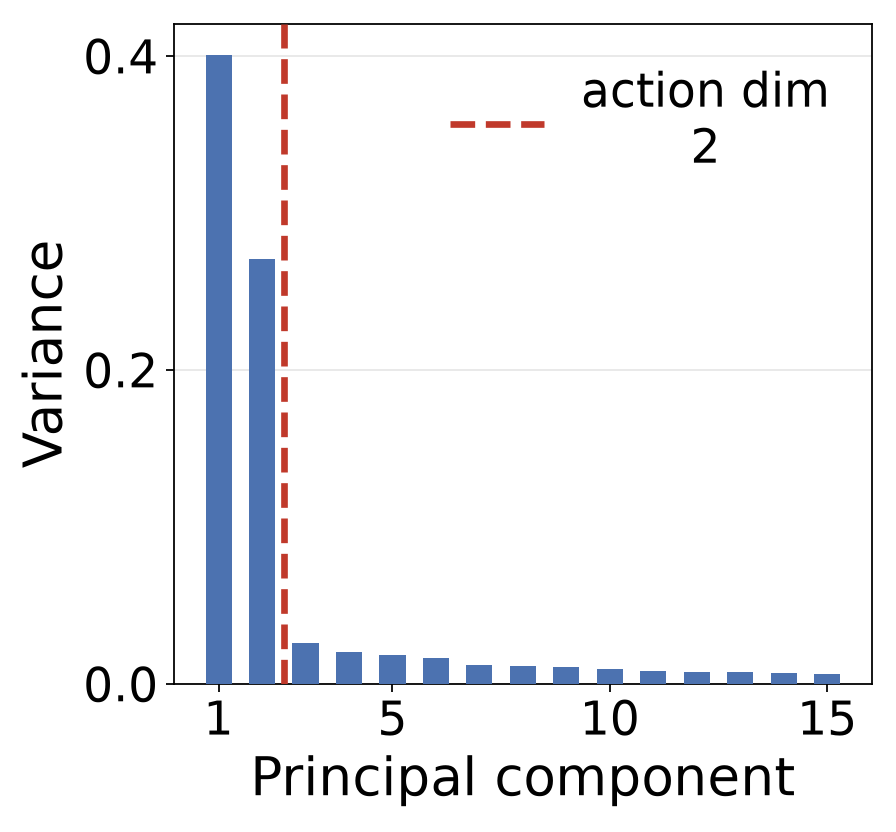}\\[0.5em]
        \includegraphics[width=\linewidth]{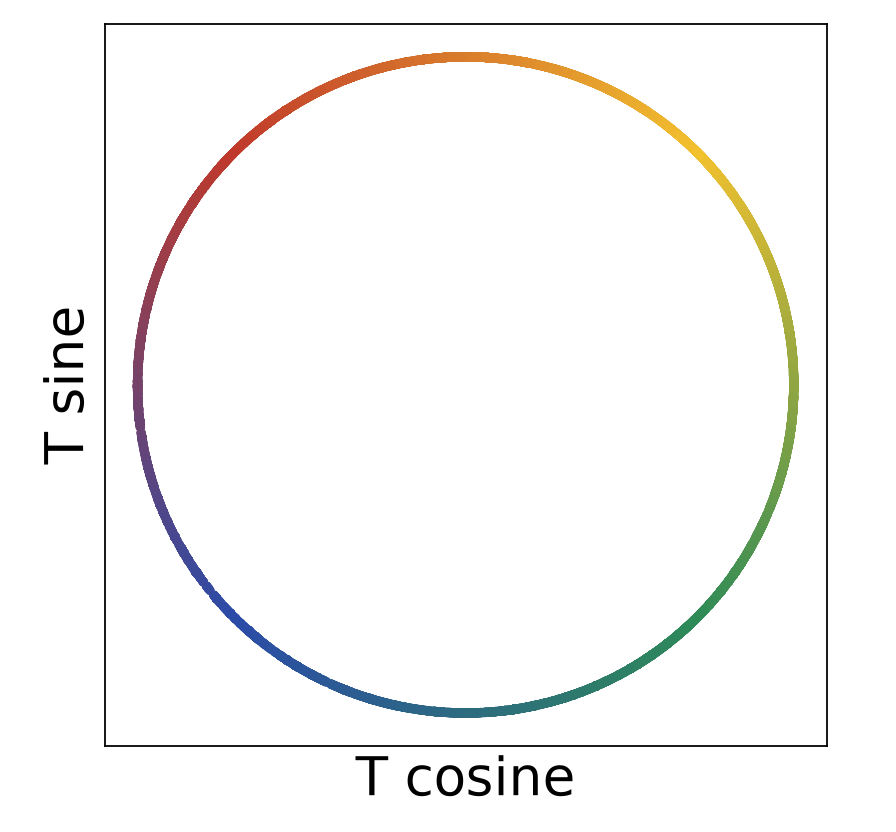}\\[0.5em]
        \includegraphics[width=\linewidth]{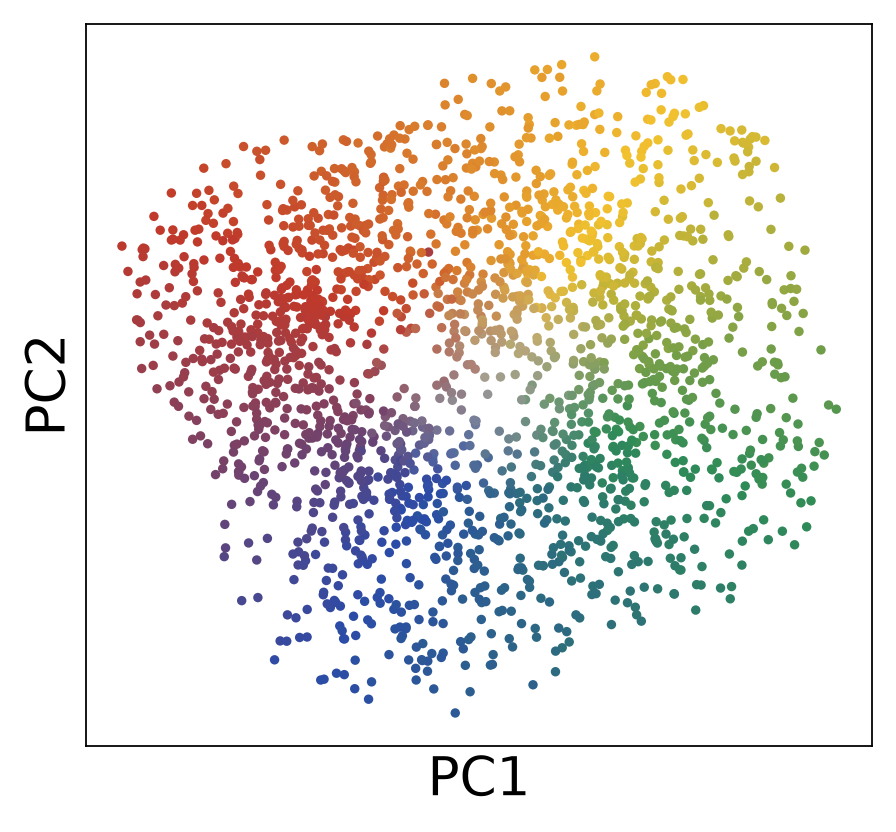}\\[0.5em]
        \includegraphics[width=\linewidth]{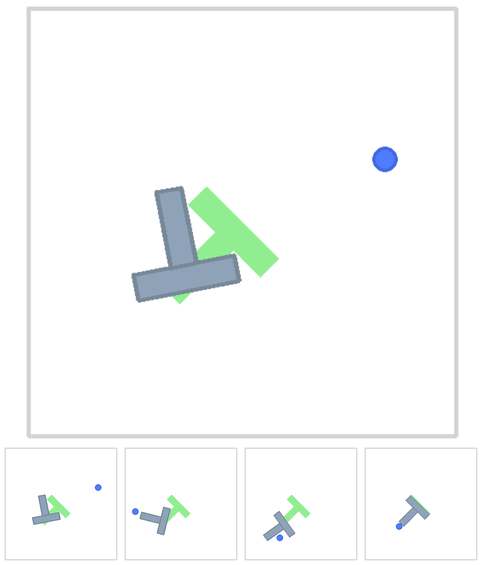}
        \caption{PushT}
    \end{subfigure}
    \hfill
    \begin{subfigure}{0.195\textwidth}
        \centering
        \includegraphics[width=\linewidth]{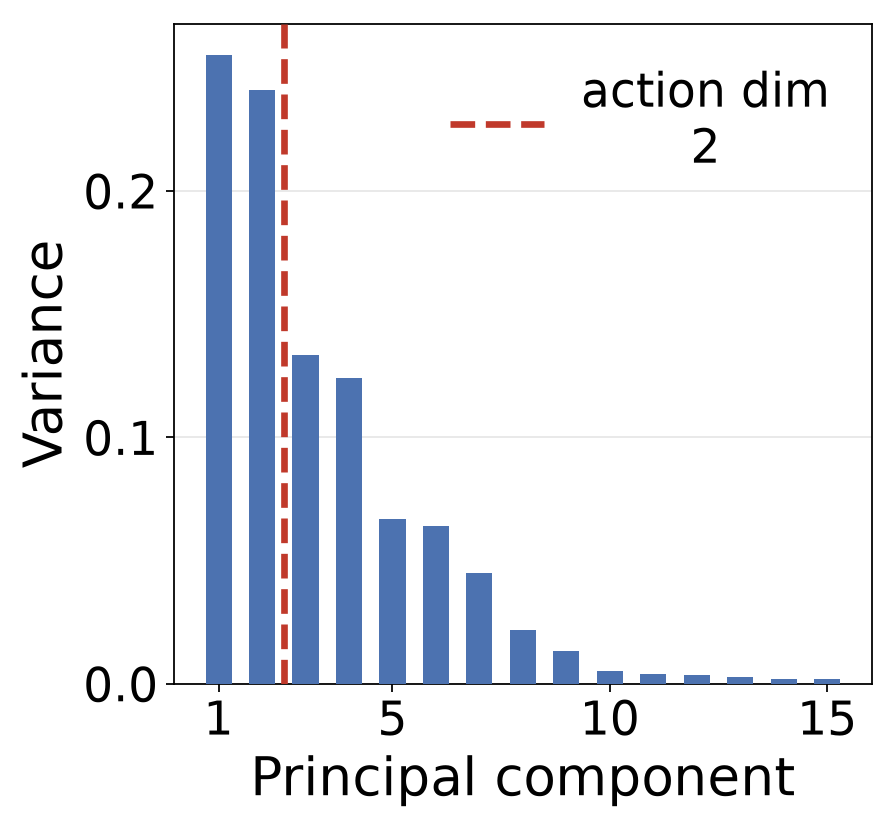}\\[0.5em]
        \includegraphics[width=\linewidth]{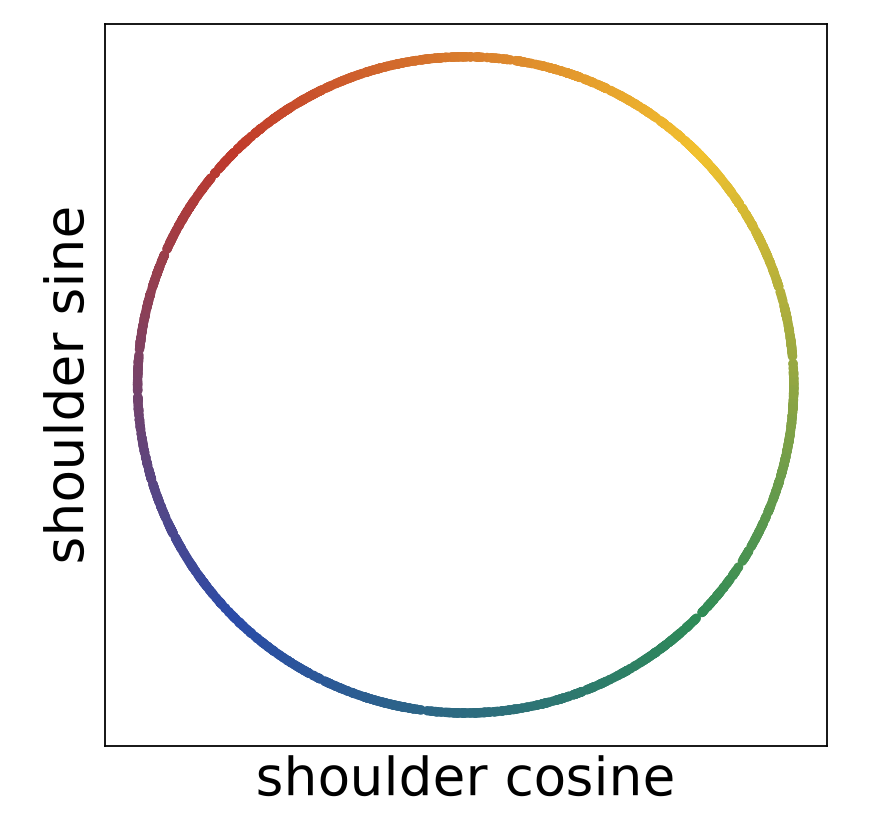}\\[0.5em]
        \includegraphics[width=\linewidth]{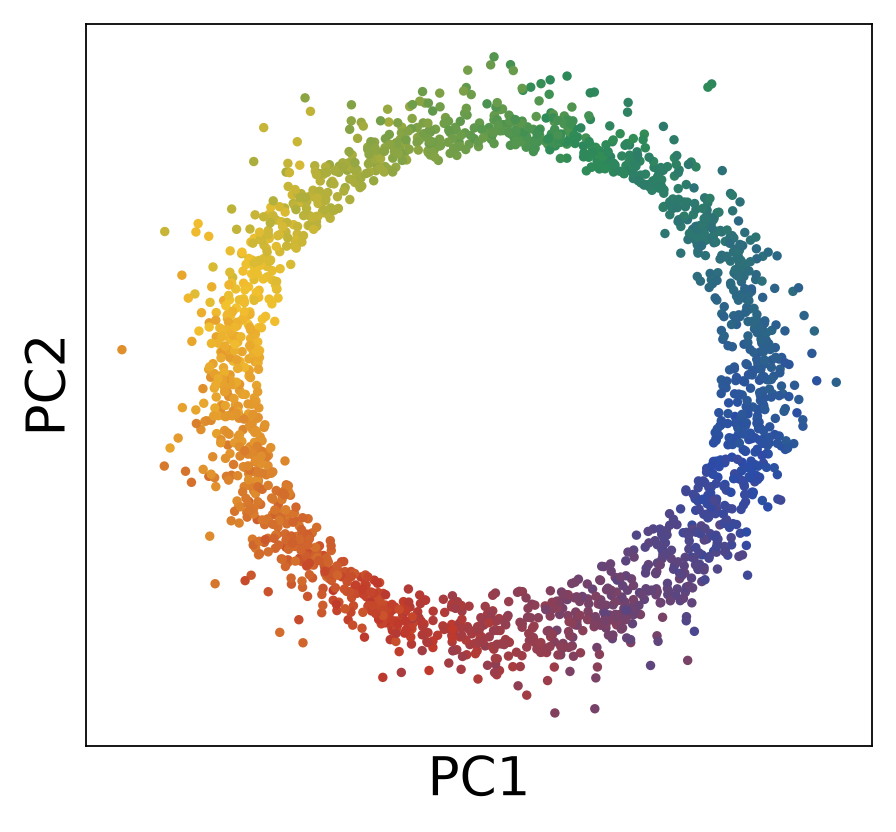}\\[0.5em]
        \includegraphics[width=\linewidth]{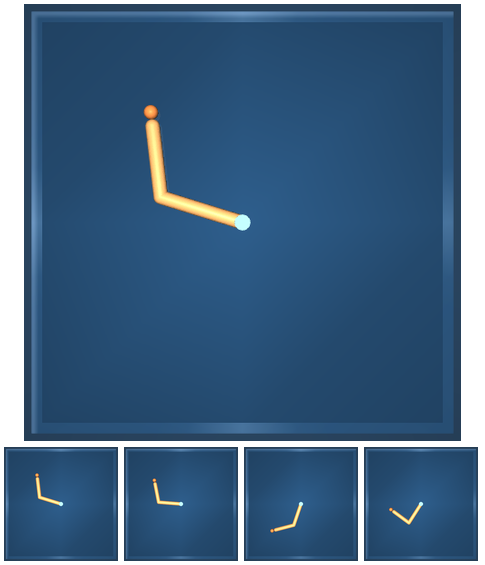}
        \caption{Reacher}
    \end{subfigure}
    \hfill
    \begin{subfigure}{0.195\textwidth}
        \centering
        \includegraphics[width=\linewidth]{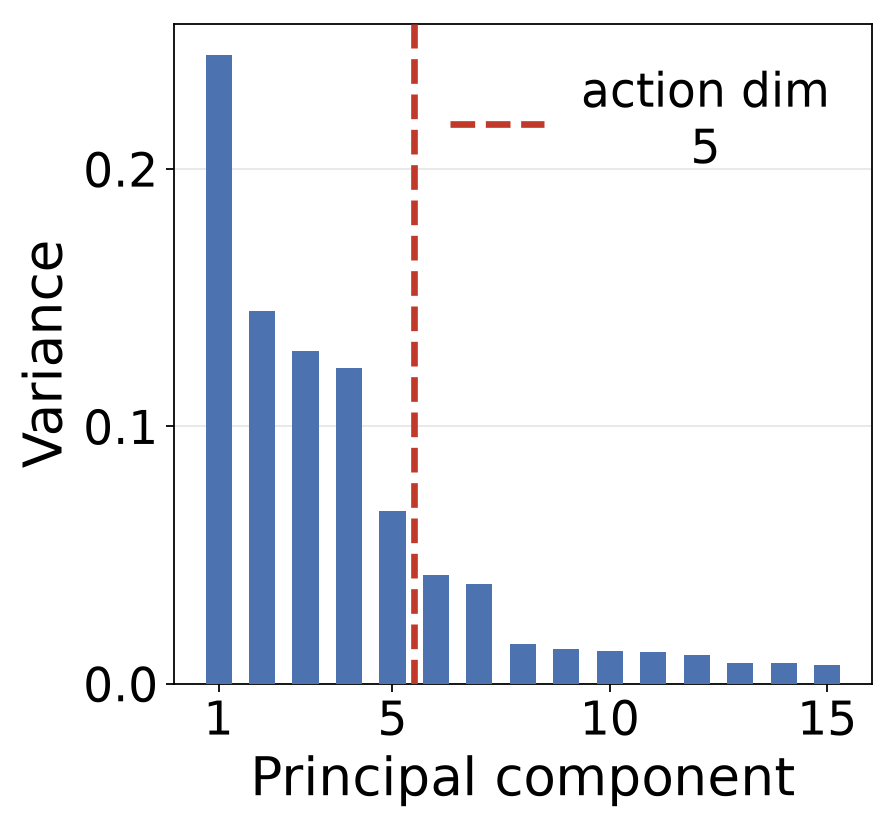}\\[0.5em]
        \includegraphics[width=\linewidth]{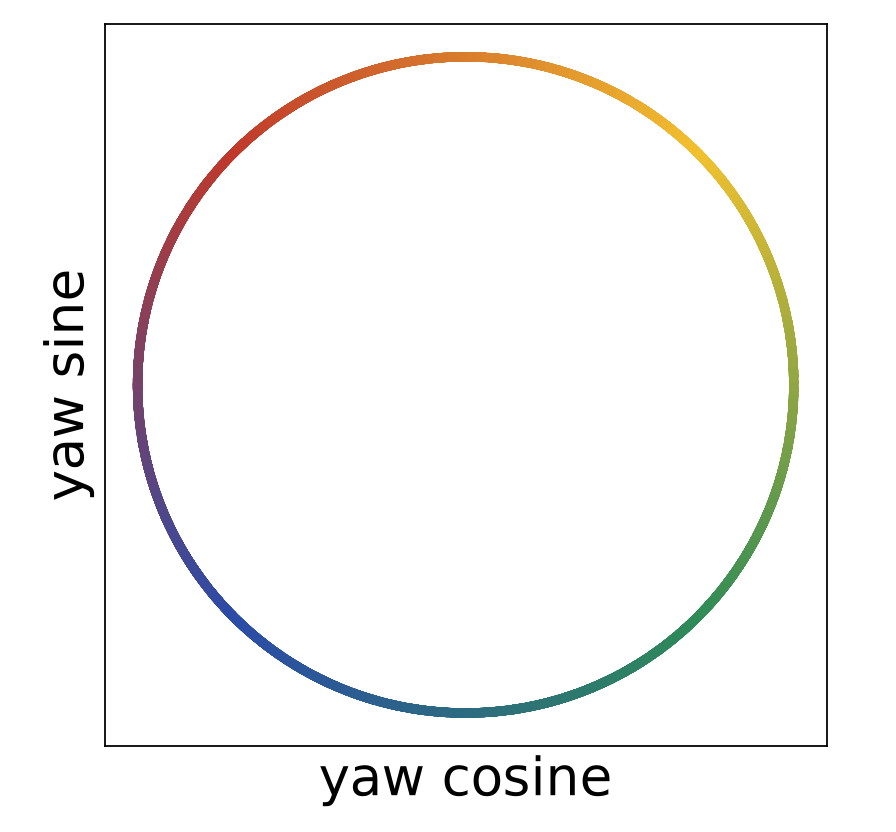}\\[0.5em]
        \includegraphics[width=\linewidth]{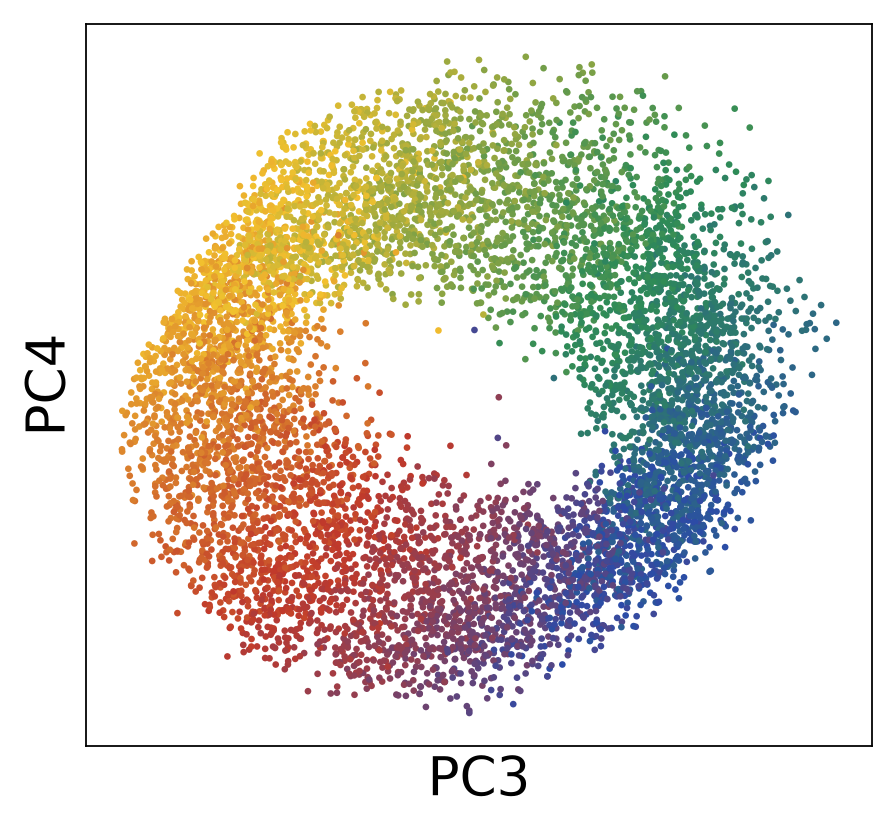}\\[0.5em]
        \includegraphics[width=\linewidth]{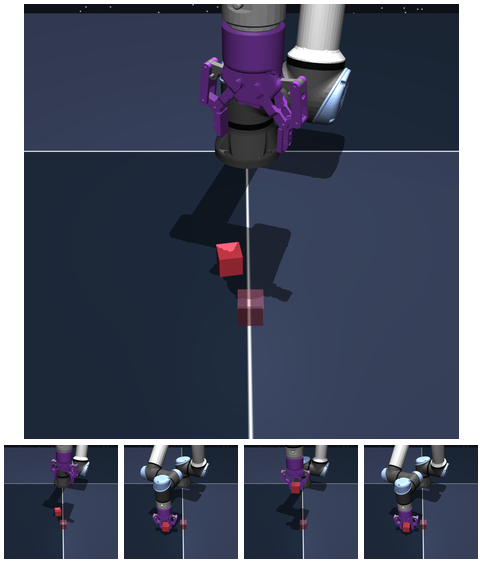}
        \caption{OG-Cube}
    \end{subfigure}
\caption{\textbf{Geometry of the learned latent representations.}
(Top) Variance explained by each principal component of the latent embeddings, with the action dimension indicated by the red dashed line. (Middle) Comparison between selected physical state variables and the corresponding projections of the latent embeddings onto their principal components. In TwoRoom, PC1 and PC2 are shown against the agent's horizontal and vertical positions. In PushT, Reacher, and Cube, principal-component projections are shown alongside angular state variables represented through their sine and cosine coordinates. (Bottom) Visualizations of the corresponding environments.}
    \label{fig:geometry}
\end{figure}

\subsection{Learned Representations and Control Performance}

\textbf{Bilinear World Models capture controllable degrees of freedom.}
Actions induce changes in the physical state of a system. By explicitly structuring how these same actions induce transitions in latent space, our parameterization encourages the encoder to organize representations according to the physical quantities affected by the actions. Figure~\ref{fig:geometry} shows that this correspondence indeed emerges in the learned representations. In TwoRoom (a), the first two principal components closely recover the agent's physical coordinates, with PC1 capturing its horizontal position and PC2 its vertical position. In PushT (b), Reacher (c), and Cube (d), the dominant components recover relevant angular degrees of freedom, forming circular structures that correspond to sine--cosine representations of the underlying angles. Moreover, across environments, most of the latent variance is concentrated in approximately as many principal components as there are action dimensions, suggesting that the learned manifold is organized around the controllable degrees of freedom of the system.

\textbf{Bilinear World Models achieve high success rates with substantially faster planning.}
The structured dynamics enable controllers to exploit local sensitivity information, resulting in markedly lower planning cost without sacrificing task performance. As shown in Figure~\ref{fig:planning_time}, Bilinear World Models match or improve the success rates of the alternative models across all four environments. Our method solves 100\% of the TwoRoom tasks and 99\% of the Cube tasks, while achieving success rates of 90\% on PushT and 75\% on Reacher. The largest improvement is observed in Cube, where the success rate exceeds that of the strongest baseline by 17 percentage points. The difference in computational cost is even more pronounced: across the benchmarks, our method reduces planning time by approximately three orders of magnitude, turning planning times of several minutes---up to 7--32 minutes for the baselines---into 0.7–3.1 seconds.

\textbf{The computational gains are not solely due to the choice of controller.}
To disentangle the effect of the learned dynamics from that of the planner, we evaluate CEM, iCEM, and Gauss--Newton (GN) on each of the three world models; full results are reported in Appendix~\ref{app:sec:controller_comparison}. GN is particularly effective when paired with Bilinear World Models, where it can exploit the explicit structure of the latent dynamics, achieving high success at substantially lower planning cost. Applying GN to SMWM or LeWorld does not provide the same combination of performance and efficiency, while iCEM reduces computation relative to CEM at the expense of lower success on the more challenging tasks. Based on this comparison, we use CEM for SMWM and LeWorld in all main benchmark and long-horizon experiments, matching the controller used in their original evaluations, and reserve iCEM only for the real-time setting where CEM cannot satisfy the required planning budget.

\textbf{Bilinear World Models enable smoother control trajectories.}
The CEM controllers used in the original baseline implementations optimize primarily for reaching the target and do not explicitly penalize variations in the action sequence. In contrast, the structured dynamics of Bilinear World Models allow such regularization to be incorporated directly into the planning objective, enabling efficiency not only in computation but also in the resulting control trajectories. Figure~\ref{fig:planning} illustrates this effect in TwoRoom, where our method achieves the lowest average jerk, measured from changes in consecutive actions. This difference is also visible in the planned trajectories: while the baseline methods produce irregular paths with abrupt changes in direction, the Bilinear World Model follows a straight trajectory toward the goal.

\begin{figure}[t]
    \centering
    \begin{subfigure}{0.235\textwidth}
        \centering
        \includegraphics[width=\linewidth]{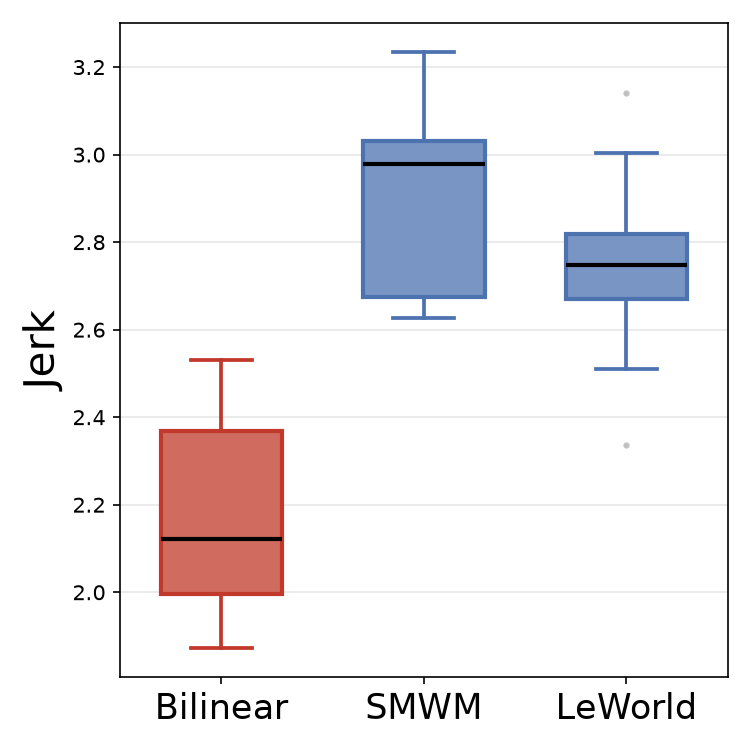}
        \vspace{1mm}
    \end{subfigure}
    \hfill
    \begin{subfigure}{0.235\textwidth}
        \centering
        \includegraphics[width=\linewidth]{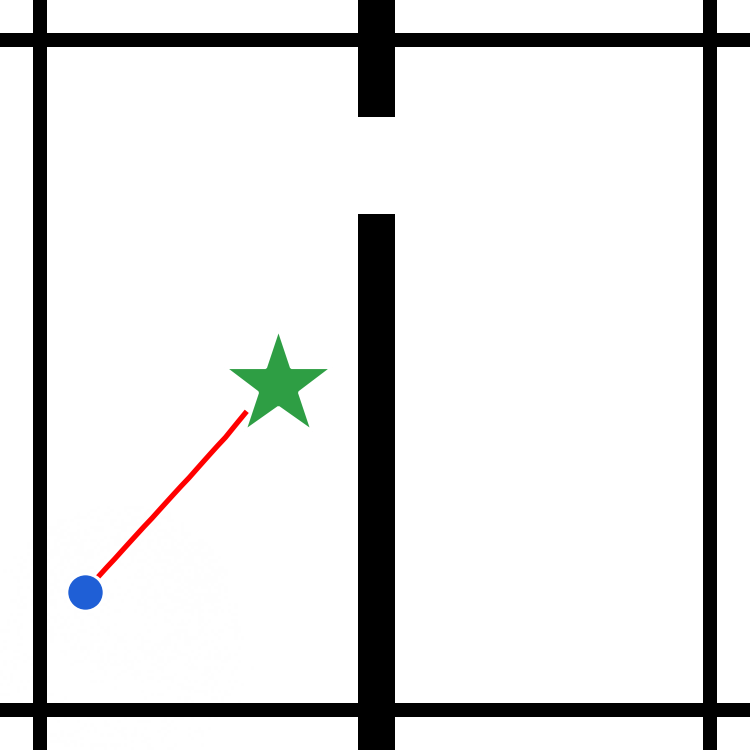}
        \caption{Bilinear}
    \end{subfigure}
    \hfill
    \begin{subfigure}{0.235\textwidth}
        \centering
        \includegraphics[width=\linewidth]{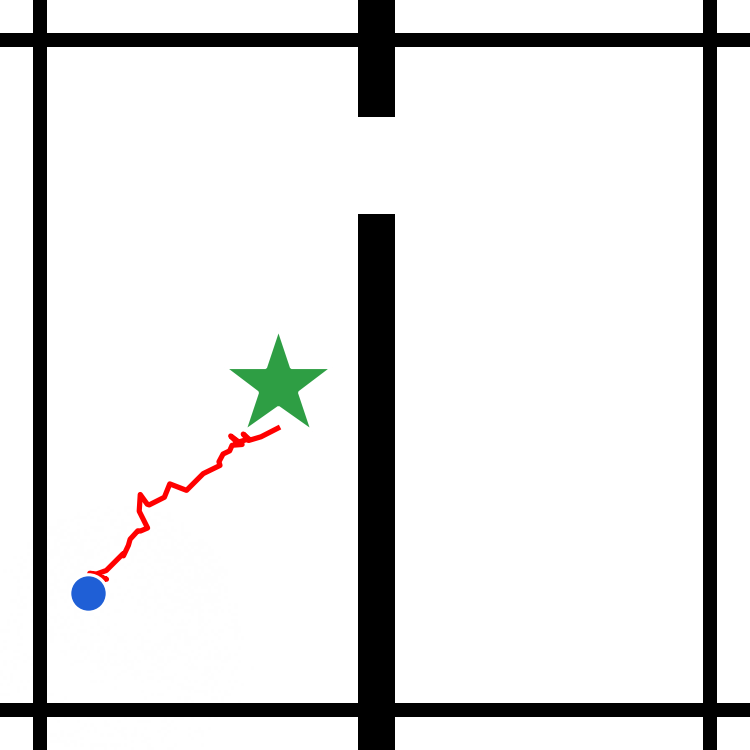}
        \caption{SMWM}
    \end{subfigure}
    \hfill
    \begin{subfigure}{0.235\textwidth}
        \centering
        \includegraphics[width=\linewidth]{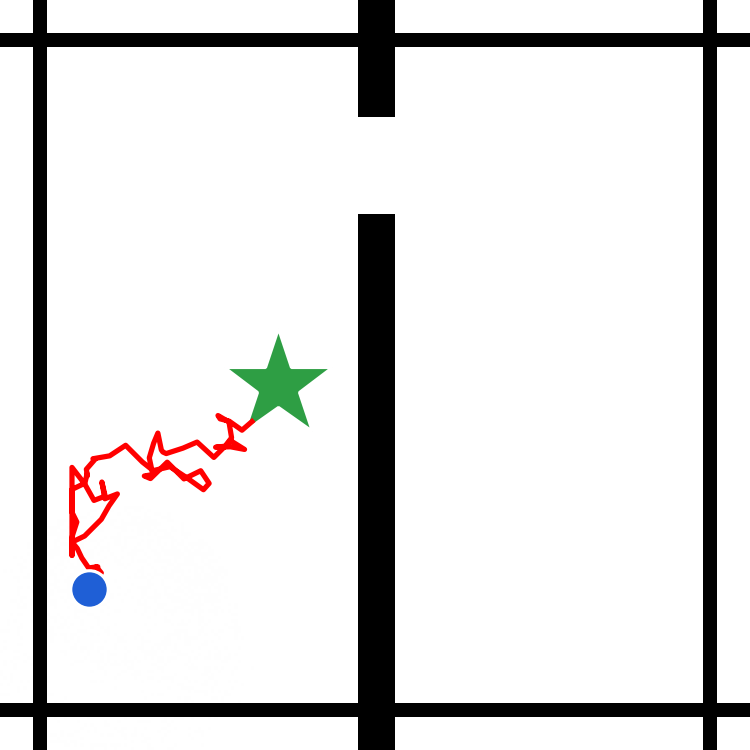}
        \caption{LeWorld}
    \end{subfigure}
\caption{\textbf{Trajectories and action smoothness in TwoRoom.}
(Left) Average jerk, measured from changes between consecutive actions, for each world model. (Right) Representative planned trajectories toward the goal. \textit{Bilinear World Models produce lower-jerk action sequences which result in smoother trajectories.}}
    \label{fig:planning}
\end{figure}

\subsection{Long-Horizon and Real-Time Control}

\textbf{Bilinear World Models scale robustly to longer planning horizons.}
We evaluate control performance as the goal offset increases to 25, 50, and 75 steps, denoted @25, @50, and @75, respectively. As shown in Figure~\ref{fig:horizons}, increasing the goal offset generally makes the control problem more challenging, although its effect varies across environments. Bilinear World Models remain competitive across all evaluated offsets while preserving a substantial computational advantage over the sampling-based alternatives. In TwoRoom, for example, success remains above 75\% at @75, while the alternative models drop to approximately 40\%. A similar advantage is observed in Reacher, where increasing the goal offset has a substantially smaller effect on the Bilinear model. In PushT and Cube, performance varies less consistently with the goal offset, but Bilinear World Models continue to achieve comparable success at substantially lower planning cost. This computational advantage becomes increasingly pronounced for longer-horizon tasks: at @75, LeWorld requires nearly 1.4 hours of planning per episode and SMWM approximately 22 minutes, whereas our method requires only 6.5 seconds.

\begin{figure}[t]
    \centering

    \begin{subfigure}{0.45\textwidth}
        \centering
        \includegraphics[width=\linewidth]{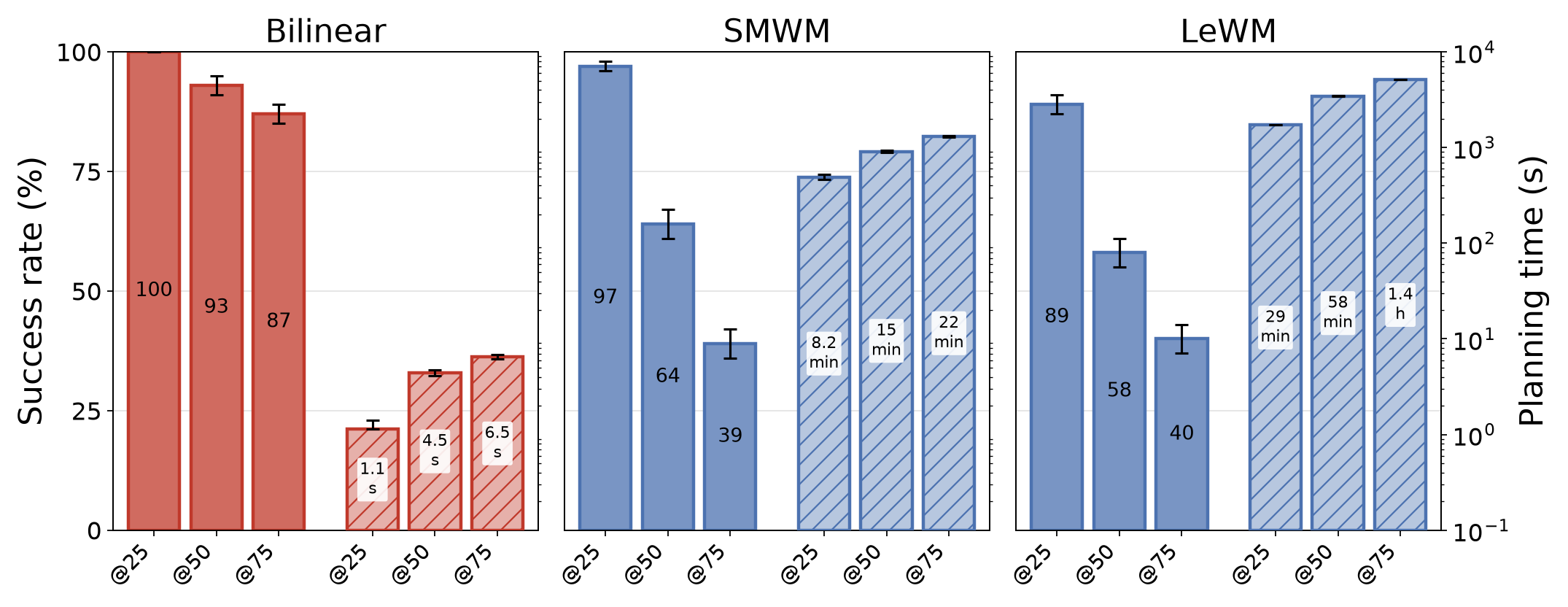}
        \caption{TwoRoom}
    \end{subfigure}
    \hfill
    \begin{subfigure}{0.45\textwidth}
        \centering
        \includegraphics[width=\linewidth]{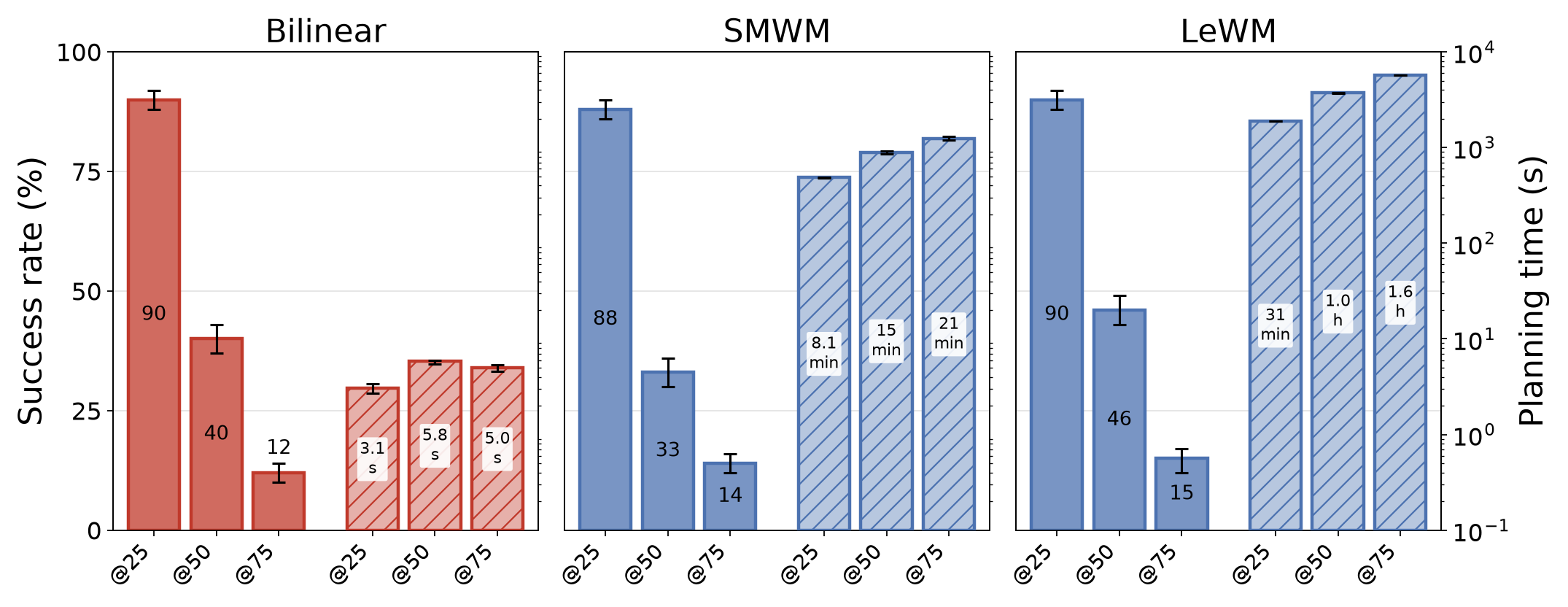}
        \caption{PushT}
    \end{subfigure}

    \vspace{0.5em}

    \begin{subfigure}{0.45\textwidth}
        \centering
        \includegraphics[width=\linewidth]{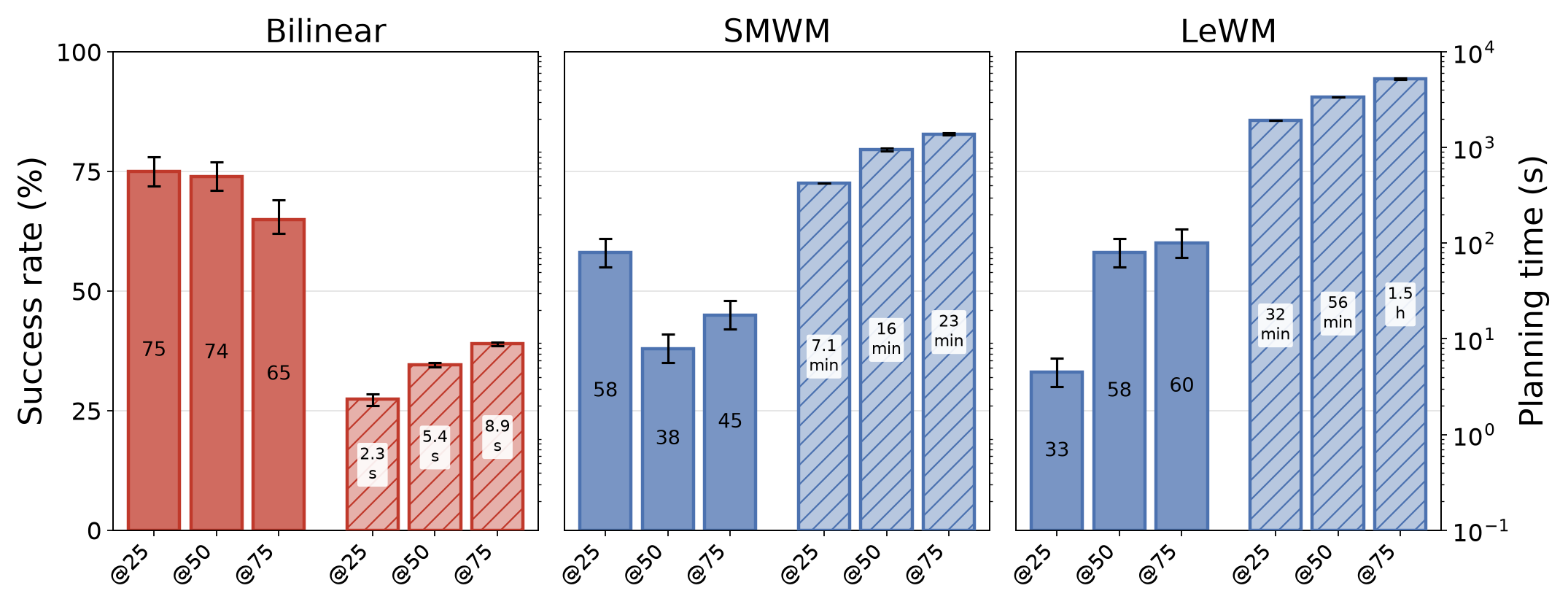}
        \caption{Reacher}
    \end{subfigure}
    \hfill
    \begin{subfigure}{0.45\textwidth}
        \centering
        \includegraphics[width=\linewidth]{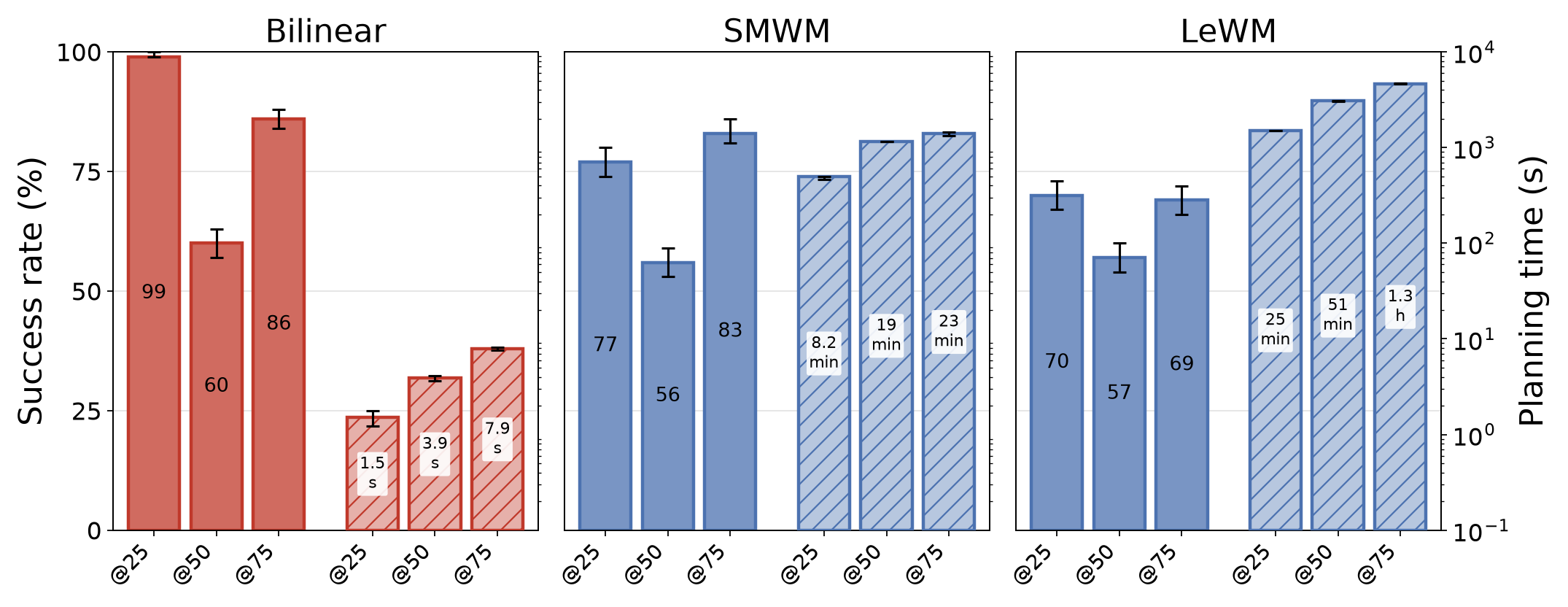}
        \caption{OG-Cube}
    \end{subfigure}

    \caption{\textbf{Performance across increasing task horizons.}
    Each panel reports success rate (solid bars) and planning time (hatched bars) for goal offsets of 25, 50, and 75 steps, denoted as @25, @50, and @75. \textit{Bilinear World Models exhibit a smaller degradation as the horizon increases while maintaining substantially lower planning times.}}
\label{fig:horizons}
\end{figure}

\textbf{Bilinear World Models enable real-time control.}
Fast planning is particularly important in dynamic environments, where the controller must repeatedly replan as the goal changes. We therefore introduce moving-goal variants of the four environments, denoted M-TwoRoom, M-PushT, M-Reacher, and M-Cube, in which the target follows a random walk; implementation details are provided in Appendix~\ref{app:sec:moving_goal}. Because CEM cannot meet the prescribed real-time budget, we use the more efficient iCEM controller~\citep{icem} for the neural-dynamics baselines, while Bilinear World Models retain the same planner used in the remaining experiments. As shown in Figure~\ref{fig:real-time}, our method achieves the highest success rate in all four environments, with 68\%, 77\%, 71\%, and 93\% success, compared with 46\%, 47\%, 45\%, and 63\% for the strongest baseline. Averaged across tasks, this corresponds to an improvement of approximately 27 percentage points.

\begin{figure}[b]
    \centering
    \begin{subfigure}{\textwidth}
        \centering
        \includegraphics[width=0.9\linewidth]{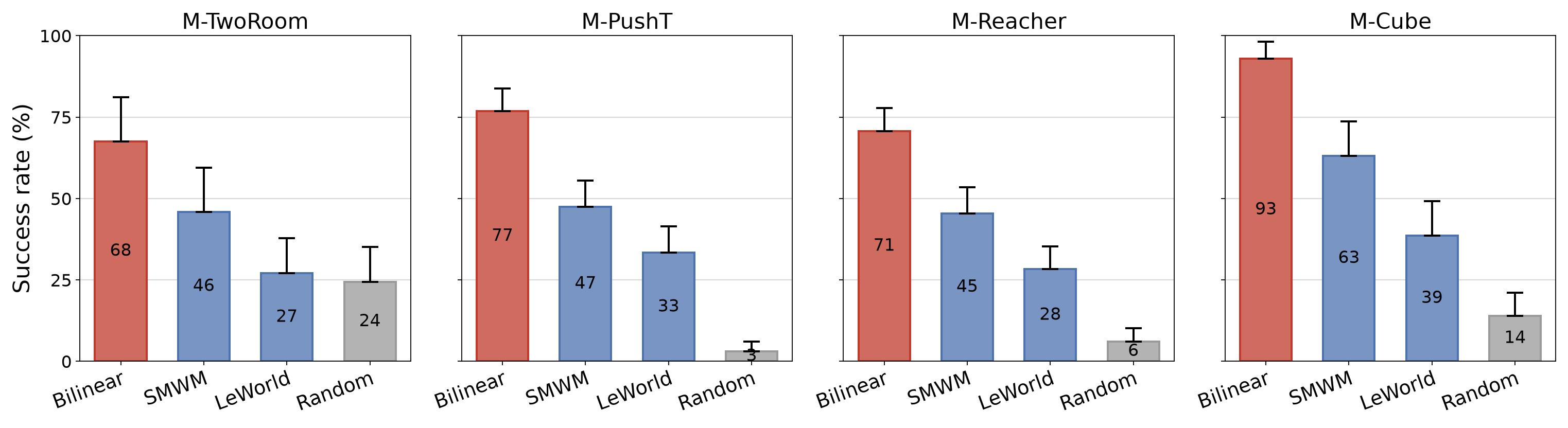}
    \end{subfigure}
    \caption{\textbf{Control performance on moving-goal tasks.}
    Success rate on M-TwoRoom, M-PushT, M-Reacher, and M-Cube under a 20\,Hz control loop. The target follows a random walk, requiring repeated replanning under controller latency. SMWM and LeWorld use iCEM~\citep{icem}, while Bilinear World Models use the same Gauss--Newton controller as in the other experiments. The Random baseline applies uniformly sampled actions.}
    \label{fig:real-time}
\end{figure}

%% file: statements.tex
\clearpage
\subsection*{AI use statement}

In this work, we used generative AI tools to assist with the implementation of experimental methods and the writing and presentation of proofs. We additionally used generative AI tools to suggest experimental parameters, create and edit software code, identify relevant literature, and edit portions of the manuscript to improve clarity and readability. All AI-assisted content was reviewed by the authors. In particular, AI-generated or AI-modified code was manually inspected and tested for correctness, and all AI-assisted mathematical arguments were independently verified by the authors. The authors take full responsibility for the final content of the manuscript, including any text, claims, code, or other artifacts produced with the assistance of generative AI.

We did not use generative AI tools to develop theoretical models or conceptual frameworks, formulate mathematical claims, provide critical ingredients for proving mathematical claims, propose or refine research hypotheses, design or provide feedback on the research methodology or experiments, generate synthetic datasets, assist with translation, clean or reformat datasets, perform qualitative or thematic data analysis, or interpret experimental results.

\subsection*{Ethics statement}

Given the nature of the work presented in this manuscript, we do not identify any additional ethical considerations that require discussion.

\subsection*{Reproducibility statement}

To facilitate reproducibility, we provide a ready-to-run script for reproducing the main experiments reported in the manuscript. The script launches a Docker container to provide a controlled and isolated computational environment. It has been designed to be simple and self-contained, with comments included to improve readability and facilitate its use. A detailed description of the hardware and software environment used to obtain the experimental results is provided in the Experimental section.

For the theoretical results, complete proofs of all formal statements, together with additional explanations where appropriate, are provided in the Appendix.

%% file: 06_appendix/input.tex
\section*{Appendix}

\input{06_appendix/a_proofs}

\input{06_appendix/b_experiment_detail}
\input{06_appendix/c_extra_results}

%% file: 06_appendix/a_proofs.tex
\section{Proof of Theorem \ref{thm:universal_koopman_representation}}
\label{apx:proof_bilinear_thm}
\begin{proof}
By Assumption~\ref{ass:action_affine}, the dynamics are control-affine. A standard result for Koopman representations of control-affine systems establishes that there exists a possibly infinite-dimensional collection of state-dependent observables whose evolution is bilinear in the observables and the action~\citep[Corollary II.1]{koopman-adv}. We choose this collection to include the original state coordinates and denote the resulting embedding by $z=E(s)$. The lifted dynamics can then be written as
\begin{align}
\dot z
=
Az
+
\left(B+Cz\right)a
=
Az+M(z)a.
\end{align}

Since the state coordinates are included among the observables, $E$ is injective and there exists a linear decoding operator $D$ such that
\begin{align}
DE(s)=s.
\end{align}

Applying the recovery operator $D$ to the lifted dynamics and using $Dz=DE(s) = s$ gives
\begin{align}
\dot s
=
DAz+DM(z)a.
\end{align}
Since this realization reproduces the original dynamics, comparison with Assumption~\ref{ass:action_affine} yields
\begin{align}
DM(E(s))
=
\begin{bmatrix}
f_1(s) & \cdots & f_m(s)
\end{bmatrix}.
\end{align}
Now suppose that, for some $v\in\mathbb R^m$,
\begin{align}
M(E(s))v=0.
\end{align}
Applying $D$ to both sides gives
\begin{align}
\begin{bmatrix}
f_1(s) & \cdots & f_m(s)
\end{bmatrix}v
=
0.
\end{align}
By Assumption~\ref{ass:independent_actions}, the matrix on the left has rank $m$, and hence $v=0$. Therefore,
\begin{align}
\ker M(E(s))=\{0\},
\end{align}
so $M(E(s))$ has full column rank for every $s\in\mathcal S$. Consequently, its Gram operator is positive definite,
\begin{align}
M(E(s))^\top M(E(s))\succ0,
\qquad
\forall s\in\mathcal S,
\end{align}
which completes the proof.
\end{proof}

%% file: 06_appendix/b_experiment_detail.tex
\section{Experimental Details}

This section presents extended experimental evaluations of four planning tasks introduced in \cite{leworld}. These tasks are intended to be solved by an agent using only observations of the world. In TwoRooms (2D), an agent is placed in one of two spaces divided by a wall and connected through a small opening and required to reach a desired target position, which may involve crossing the opening. This task is intended to model physical constraints of the environment, since the walls cannot be crossed by the agent. The PushT (2D) problem requires an agent to make contact with a T-shaped object and rotate, i.e., push it, to reach a desired position. This task is intended to model situations that are affected by external movable objects that require interaction. In Reacher (2D), a fixed arm with one joint is allowed to move within a circle to reach a target object, thus modeling rotations and multiple degrees of freedom. In Cube (3D), a gripper arm is required to catch a cube and move it to a target location, modeling complex actuation in spatial conditions.

\textbf{Hardware.} All experiments in the manuscript were performed on a single workstation with one NVIDIA GeForce RTX 3090 (24\,GB, driver 550.144.03), an AMD Ryzen Threadripper 3960X (24 cores, 48 threads) and 125\,GB of RAM. Every world model was trained on that single GPU in \texttt{bf16}. The software stack was PyTorch 2.7.1 (CUDA 12.6,
cuDNN 9.5.1), Lightning 2.6.6, MuJoCo 3.13.0, \texttt{dm\_control} 1.0.46 and Gymnasium 1.3.0.

\subsection{Solving the constrained optimization problem~\ref{eq:problem}}
\label{app:sec:problem}

The problem~\ref{eq:problem} is formulated using the normalized action map
$M(z)=Q(z)R$, where the state-dependent factor $Q(z)$ is obtained by applying a Cholesky--QR decomposition to the bilinear parameterization
$\widetilde M(z)=B+Cz$. This normalization changes the exact functional form of the dynamics, which are therefore no longer strictly bilinear. However, it preserves the state-dependent action subspace generated by the original bilinear map. In particular, whenever $\widetilde M(z)$ has full column rank and $R$ is nonsingular,
\begin{equation}
    \operatorname{col}\!\left(M(z)\right)
    =
    \operatorname{col}\!\left(Q(z)\right)
    =
    \operatorname{col}\!\left(\widetilde M(z)\right).
    \label{app:eq:column_space}
\end{equation}
Thus, the bilinear parameterization $\widetilde M(z)=B+Cz$ determines the state-dependent action directions, while the normalization provides a well-conditioned basis for the same subspace.

Specifically, the normalized action map is given by
\begin{equation}
    M(z)=Q(z)R,
    \qquad
    Q(z)=\mathrm{CholQR}\!\left(\widetilde M(z)\right)
    =\mathrm{CholQR}\!\left(B+Cz\right),
    \label{app:eq:qr_parameterization}
\end{equation}
where $Q(z)\in\mathbb{R}^{d\times m}$ has orthonormal columns and $R\in\mathbb{R}^{m\times m}$ is a learned state-independent matrix. Consequently,
\begin{equation}
\begin{aligned}
    M(z)^\top M(z)
    &=R^\top Q(z)^\top Q(z)R \\
    &=R^\top R.
\end{aligned}
\label{app:eq:gram_factorization}
\end{equation}
Hence, the full-column-rank condition of the normalized action map is independent of the latent state. In particular,
\begin{equation}
    M(z)^\top M(z)\succ0
    \quad\Longleftrightarrow\quad
    \sigma_{\min}(R)>0.
    \label{app:eq:singular_value_condition}
\end{equation}
The matrix $Q(z)$ therefore determines the orientation of the state-dependent action directions, while their scaling and conditioning are controlled by the constant matrix $R$.

Although Problem~\eqref{eq:problem} expresses action recoverability through the hard constraint $R^\top R\succeq\varepsilon I$, enforcing this constraint exactly during stochastic optimization is inconvenient in practice. We therefore solve a regularized surrogate in which action recoverability is encouraged directly from the observed transitions,

\begin{equation}
\begin{aligned}
\underset{\phi_\theta,A,B,C,R}{\mathrm{minimize}}\quad
&\frac{1}{N}\sum_t
\Big\|
\phi_\theta(s_{t+1})
-
A\phi_\theta(s_t)
-
\mathrm{CholQR}\!\left(B+C\phi_\theta(s_t)\right)R\,a_t
\Big\|^2 \\
&\quad+
\lambda\frac{1}{N}\sum_t
\Bigg\|
a_t
-
\Big[
\mathrm{CholQR}\!\left(B+C\phi_\theta(s_t)\right)R
\Big]^\dagger
\Big(
\phi_\theta(s_{t+1})
-
A\phi_\theta(s_t)
\Big)
\Bigg\|^2.
\end{aligned}
\label{app:eq:optimization}
\tag{$\hat{\mathrm P}$}
\end{equation}

The first term fits the normalized latent dynamics to the observed transitions. The second directly encourages the action that generated each transition to be recoverable from the corresponding latent displacement. This discourages degenerate solutions in which distinct actions produce indistinguishable latent transitions and provides a practical surrogate for the full-rank requirement in Problem~\eqref{eq:problem}. Although this surrogate does not explicitly enforce $R^\top R\succeq\varepsilon I$, we empirically verify in Appendix~\ref{app:sec:training_metrics} that $\sigma_{\min}(R)$ remains bounded away from zero throughout training across all environments, and hence that the learned action map remains full column rank. Since the Cholesky--QR operation is differentiable, gradients can be propagated through the normalization and the complete model can be trained end-to-end using standard gradient-based optimization. For all Bilinear World Model experiments, we use $\lambda=30$.

\subsection{Controllers}
\label{app:sec:controller}

We provide implementation details for the controllers used in the experiments of the main text. Specifically, we describe the Cross-Entropy Method (CEM) used for the standard planning experiments, the iterative CEM (iCEM) variant used for real-time control, and our implementation of the Gauss--Newton (GN).

\subsubsection{Cross-Entropy Method}

The Cross-Entropy Method (CEM)~\citep{cem} is the sampling-based planner used in the original evaluations of LeWM and Sensorimotor. Given a planning horizon $H$, CEM maintains a Gaussian distribution over action sequences
\begin{align}
    \mathbf{a}
    =
    (a_0,\ldots,a_{H-1})
    \sim
    \mathcal{N}(\mu,\Sigma).
\end{align}
At each iteration, $N$ action sequences are sampled from this distribution and rolled out through the learned world model. Each sequence is evaluated according to the planning objective
\begin{align}
    \mathcal{L}(\mathbf{a})
    =
    \frac{1}{2}
    \left\|
    z_H(\mathbf{a})-z^\star
    \right\|^2.
\end{align}
The $K$ sequences with lowest cost are retained as the elite set $\mathcal{E}$. The sampling distribution is then updated according to the empirical mean and variance of these elites,
\begin{align}
    \mu
    &\gets
    \frac{1}{K}
    \sum_{\mathbf{a}\in\mathcal{E}}
    \mathbf{a},
    \\
    \Sigma
    &\gets
    \frac{1}{K}
    \sum_{\mathbf{a}\in\mathcal{E}}
    (\mathbf{a}-\mu)(\mathbf{a}-\mu)^\top.
\end{align}
This procedure is repeated for a fixed number of iterations, and the lowest-cost sequence found during the optimization is selected for execution.

CEM treats the learned world model as a black box and does not use the Jacobian of the predicted endpoint with respect to the actions. Its computational cost is therefore dominated by the large number of candidate trajectories that must be rolled out and evaluated during the search.

\subsubsection{Improved Cross-Entropy Method}

We additionally evaluate iCEM, a more sample-efficient variant of CEM. The overall optimization procedure remains unchanged, but the sampling strategy is modified to concentrate computation on more promising and temporally coherent action sequences.

In particular, instead of sampling independent white Gaussian perturbations, our implementation uses temporally correlated colored noise. Starting from white noise $\xi$, its Fourier coefficients are scaled according to
\begin{align}
    \widehat{\xi}_k^{\,\mathrm{col}}
    \propto
    f_k^{-\beta/2}
    \widehat{\xi}_k,
\end{align}
followed by normalization back to the desired variance. We use $\beta=2$, which places greater weight on low-frequency components and therefore produces smoother action sequences across the planning horizon. Setting $\beta=0$ recovers independent white-noise sampling as in standard CEM.

The implementation also reuses elite trajectories between consecutive optimization iterations. Specifically, the best five sequences from the previous iteration are inserted directly into the next candidate population, together with the current distribution mean. The remaining candidates are newly sampled from the updated distribution. This allows promising trajectories to persist across iterations without requiring them to be rediscovered by sampling.

Our implementation does not use population decay, and the number of sampled trajectories therefore remains fixed across optimization iterations. It also does not shift the optimized action sequence between successive replanning steps. Unless an initial action sequence is explicitly provided, each planning problem is initialized from a zero-mean action distribution.

\subsubsection{Gauss--Newton Planner}
\label{app:sec:gn-controller}

We implement the planner of Section~\ref{sec:control} in a receding-horizon fashion. At each replan, the current observation and goal are encoded into $z_0$ and $z^\star$, and an action sequence $\mathbf{a}=(a_0,\ldots,a_{H-1})$ is optimized according to~\eqref{eq:planning-objective}. Each planning step corresponds to the same action block used during training, and the optimized sequence is executed before replanning from the resulting observation.

\paragraph{Reduced action space.}
In practice, each model action contains several low-level environment actions. Optimizing all of these coordinates independently produces poorly conditioned Gauss--Newton systems, since variations within an action block have little distinguishable effect on the learned dynamics. We therefore restrict the update to a lower-dimensional coherent subspace in which the low-level actions within each block vary together. This reduces the number of optimized variables from $50$ to $10$ for the 2D tasks with $H=5$, and from $125$ to at most $15$ for Cube. The corresponding lifting from the reduced variables to the complete action sequence is denoted by $E$.

\paragraph{Endpoint sensitivities.}
Although Section~\ref{sec:control} gives an explicit expression for the endpoint Jacobian, in the reported experiments we estimate its restriction to the coherent subspace using directional finite differences. At each iteration, the nominal action sequence and all perturbed sequences are evaluated jointly in a single batched rollout,
\begin{align}
\widehat{\mathcal J E}_{:,j}
=
\frac{
z_H(\mathbf{a}+\varepsilon E_{:,j})
-
z_H(\mathbf{a})
}{\varepsilon},
\qquad
\varepsilon=10^{-3}.
\end{align}
This directly captures the sensitivity of the complete nonlinear predictor, including the state dependence of the normalized action directions and the learned residual dynamics, without explicitly differentiating through the full rollout.

\paragraph{Damped update and line search.}
Let $C=\widehat{\mathcal J E}$ and $e=z_H(\mathbf{a})-z^\star$. The reduced Gauss--Newton direction is obtained from
\begin{align}
\left[
C^\top C+(c+\mu)I
\right]\delta_c
=
-
\left[
C^\top e+cE^\top\mathbf{a}
\right],
\qquad
\delta=E\delta_c,
\end{align}
with damping $\mu=10^{-4}$. We then evaluate the step sizes
$\{1,\frac12,\frac14,\frac18\}$ and accept the first one that decreases the true nonlinear planning objective. All candidate step sizes are evaluated jointly in a single batched rollout.

\paragraph{Initialization.}
Since Gauss--Newton is a local method, we optionally initialize the action sequence from a small set of candidate initializations containing the current warm start, several gradient-aligned sequences, and random perturbations. All candidates are evaluated in parallel and the lowest-cost sequence is used to initialize the optimization. The number of candidates, planning horizon, reduced dimension, and iteration budget used for each task are reported in Table~\ref{tab:a}.

\paragraph{Planning residual.}
The structured model is intentionally restricted during representation learning, but this restriction can leave systematic prediction errors in contact-rich environments. We therefore train an auxiliary residual model \emph{a posteriori}, after the encoder and structured dynamics have been learned and fixed. Consequently, the residual does not influence the learned representation or the parameters $A,B,C,R$, and is used only to improve the transition model employed during planning.

During planning, we augment the structured transition
$S(z,a)=Az+M(z)a$ as
\begin{align}
\label{eq:planning-residual}
f_{\mathrm{plan}}(z,a)
=
S(z,a)
+
\bigl(I-P(z)P(z)^\top\bigr)r_\psi(z,a),
\end{align}
where $P(z)$ spans the coherent control directions introduced above. The projection prevents the residual from modifying these directions while allowing it to capture components of the transition that cannot be represented by the structured model alone.

The endpoint sensitivities used by the controller are computed through this complete planning model. Therefore, the directional finite differences described below automatically account for both the structured dynamics and the residual correction.

\begin{table}[]
\centering\small
\begin{tabular}{lcccccccc}
\toprule
Task & $d_a$ & $H$ & $K$ & $n_b$ & $K_c$ & iters & $n_{\text{cand}}$ & $\rho$ \\
\midrule
PushT   & 2 & 5 & 50  & 5 & 10 & 40 & 16 & 0.5 \\
Reacher & 2 & 3 & 30  & 3 & 6  & 20 & 64 & 2.0 \\
TwoRoom & 2 & 5 & 50  & 3 & 6  & 40 & 0  & --- \\
Cube    & 5 & 5 & 125 & 3 & 15 & 20 & 0  & --- \\
\bottomrule
\end{tabular}
\caption{Gauss--Newton planner settings for the reported cells. $H$: planning
horizon in action blocks; $m=k\,d_a$ with block size $k=5$; $K=Hm$ raw unknowns,
$K_c$ coherent unknowns; $n_b$: time blocks refreshed per iteration.
Shared across all tasks: $\lambda_g=1$, $c=0.01$, $\mu=10^{-4}$,
$\varepsilon=10^{-3}$, line search $\{1,\tfrac12,\tfrac14,\tfrac18\}$,
$n_{\text{starts}}=1$, tolerance $10^{-6}$.}
\label{tab:a}
\end{table}

\subsection{Moving goal implementation}
\label{app:sec:moving_goal}

\paragraph{Task construction.}
We construct moving-goal variants of the four benchmark environments by allowing the target state to follow a random walk throughout the episode. At each environment step, the goal state $g_t$ is displaced by
$\sigma v_{\max}u$, where $u$ is a uniformly sampled direction, $v_{\max}$ is the characteristic maximum per-step motion of the corresponding system, and $\sigma$ controls the relative goal speed. We estimate $v_{\max}$ as the 95th percentile of per-step motion in the expert data, obtaining $22.4$\,px for PushT, $0.169$\,rad for Reacher, and $0.028$\,m for Cube. Invalid displacements that leave the workspace or cross obstacles are resampled. Goal trajectories are seeded per task so that all methods are evaluated on identical moving targets. The resulting environments are denoted M-TwoRoom, M-PushT, M-Reacher, and M-Cube. The goal is re-rendered at every step, and the original environment-specific success criterion is used.

\paragraph{Real-time evaluation protocol.}
Planning latency is explicitly incorporated into the control loop. For each method and controller, we first measure the wall-clock latency $\tau$ of a single planning call on one environment. Given a control period $\Delta$, this corresponds to a delay of
$L=\lceil \tau/\Delta\rceil$ environment steps. A plan computed from the observation and goal at step $t$ therefore becomes available at step $t+L$. During this interval, the agent continues executing the remainder of its previous plan, or applies a zero action if no action remains. Methods replan when the current plan is exhausted.

All experiments in Figure~\ref{fig:real-time} use a 20\,Hz control loop, corresponding to $\Delta=50$\,ms. The baselines use iCEM~\citep{icem}, since the execution time of standard CEM is incompatible with the real-time budget. Bilinear World Models use the same Gauss--Newton controller employed in the remaining experiments. For each baseline, we report the best-performing iCEM sample budget under this protocol.

\paragraph{Evaluation settings.}
Goal speeds are selected independently for each environment and expressed relative to the corresponding $v_{\max}$: $3.0$ for TwoRoom, $0.1$ for PushT, $0.05$ for Reacher, and $0.25$ for Cube. To avoid inflating success rates with tasks that require no meaningful control, we evaluate only episodes that are not solved by a policy that never acts. 

%% file: 06_appendix/c_extra_results.tex
\section{Additional Results}
\label{app:sec:experients}

This section provides additional empirical results complementing the main experiments. We first examine the optimization behavior of Bilinear World Models and verify that the learned action map remains nondegenerate throughout training. We then compare the performance of CEM, iCEM, and Gauss--Newton (GN) across the different world-model architectures. Finally, we provide additional visualizations of the geometry of the learned latent representations.

\subsection{Training dynamics and action-map conditioning}
\label{app:sec:training_metrics}

Problem~\ref{eq:problem} requires the state-dependent action map $M(z)$ to have full column rank. As described in Appendix~\ref{app:sec:problem}, our implementation parameterizes the action map as
\begin{equation}
    M(z)=Q(z)R,
    \label{app:eq:results_factorization}
\end{equation}
where $Q(z)$ has orthonormal columns and $R$ is a learned state-independent matrix. Therefore,
\begin{equation}
    M(z)^\top M(z)=R^\top R,
    \label{app:eq:results_gram}
\end{equation}
and the singular values of $M(z)$ are identical to those of $R$ for every latent state $z$. In particular,
\begin{equation}
    \sigma_{\min}\!\left(M(z)\right)
    =
    \sigma_{\min}(R),
    \qquad \forall z.
    \label{app:eq:results_singular_values}
\end{equation}
Consequently, no averaging or minimization over latent states is required to evaluate the conditioning of the learned action map.

Figure~\ref{app:fig:train} reports the latent forward-prediction loss together with $\sigma_{\min}(R)$ throughout training. Across all four environments, the prediction loss decreases steadily while the minimum singular value remains bounded away from zero. Thus, the learned action map remains full column rank throughout training while simultaneously fitting the observed latent dynamics.
\begin{figure}[]
    \centering

    \begin{subfigure}{0.49\textwidth}
        \centering
        \includegraphics[width=\linewidth]{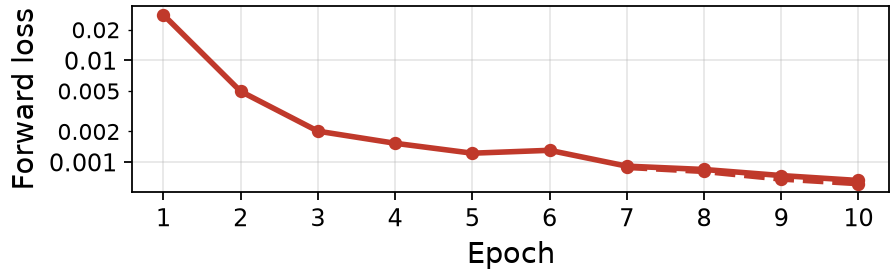}\\[0.3em]
        \includegraphics[width=\linewidth]{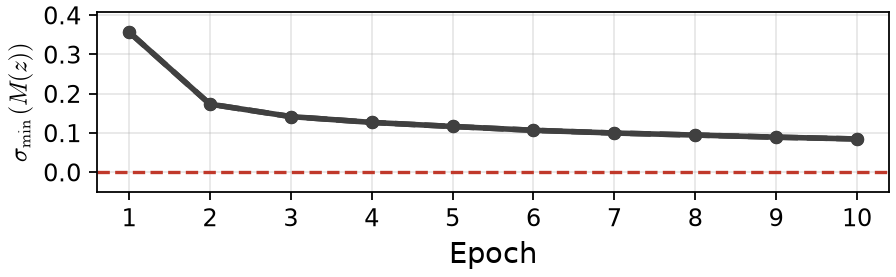}
        \caption{TwoRoom}
    \end{subfigure}
    \hfill
    \begin{subfigure}{0.49\textwidth}
        \centering
        \includegraphics[width=\linewidth]{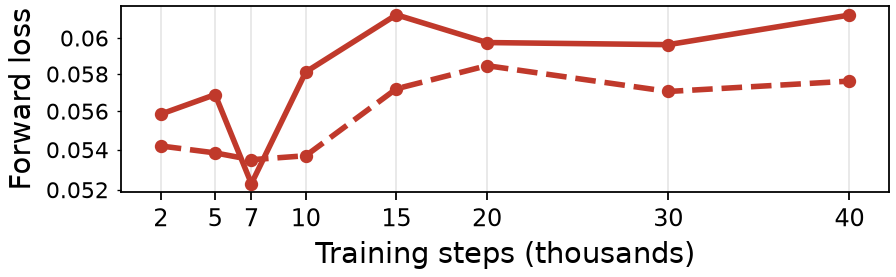}\\[0.3em]
        \includegraphics[width=\linewidth]{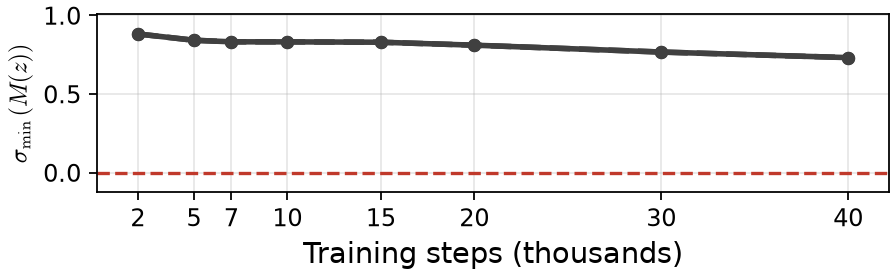}
        \caption{PushT}
    \end{subfigure}

    \vspace{0.5em}

    \begin{subfigure}{0.49\textwidth}
        \centering
        \includegraphics[width=\linewidth]{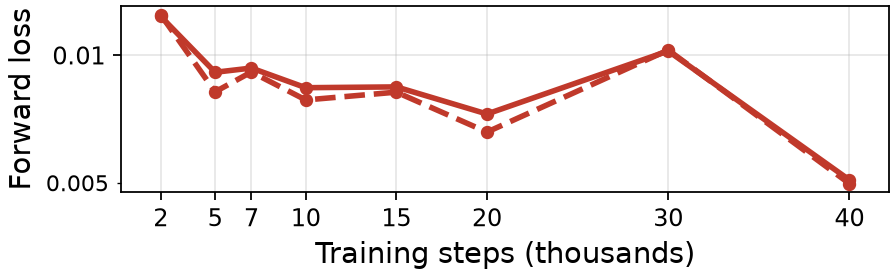}\\[0.3em]
        \includegraphics[width=\linewidth]{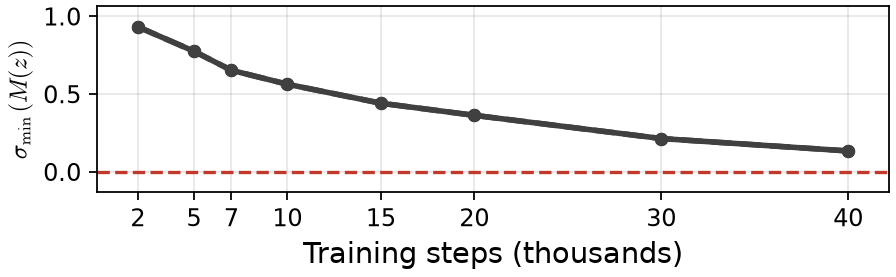}
        \caption{Reacher}
    \end{subfigure}
    \hfill
    \begin{subfigure}{0.49\textwidth}
        \centering
        \includegraphics[width=\linewidth]{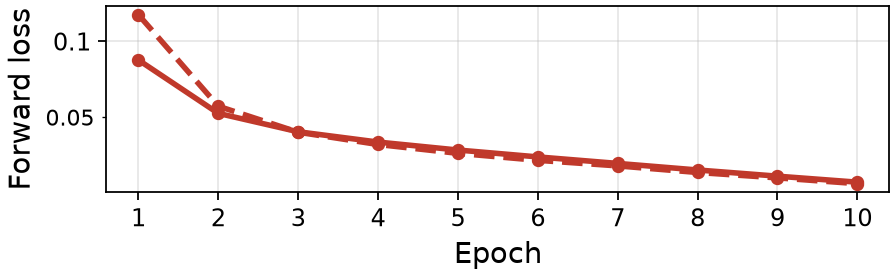}\\[0.3em]
        \includegraphics[width=\linewidth]{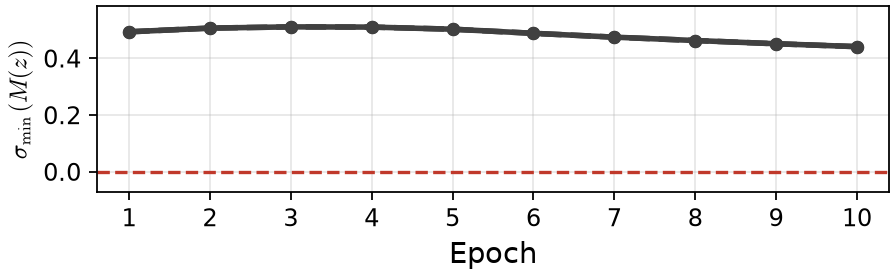}
        \caption{Cube}
    \end{subfigure}

    \caption{\textbf{Training dynamics of Bilinear World Models.} The latent forward-prediction loss decreases throughout training, while the minimum singular value $\sigma_{\min}(R)$ remains bounded away from zero. Since $M(z)=Q(z)R$ with $Q(z)$ orthonormal, $\sigma_{\min}(R)=\sigma_{\min}(M(z))$ for every latent state $z$. The learned action map therefore remains full column rank throughout training.}
    \label{app:fig:train}
\end{figure}

\subsection{Controller comparison across world models}
\label{app:sec:controller_comparison}

The main advantage of imposing structure on the latent dynamics is that this structure can be exploited directly during control. To separate the effect of the learned world model from that of the controller, we evaluate three planning methods---CEM, iCEM, and Gauss--Newton (GN)---on Bilinear World Models, SMWM~\citep{sensimotor}, and LeWorld~\citep{leworld}. Figure~\ref{app:fig:comparison} reports both success rate and planning time across the four environments.

GN benefits directly from the structured dynamics of our model. When paired with Bilinear World Models, it achieves strong control performance with substantially lower planning time. In contrast, its performance degrades when applied to SMWM and LeWorld, whose latent dynamics are represented by unconstrained nonlinear models and therefore do not provide the same structure to the optimizer. These results indicate that the computational advantage of GN is closely tied to the explicit form of the learned dynamics.

CEM provides a complementary comparison because it is agnostic to the structure of the world model and can therefore be applied uniformly to all three architectures. When used with Bilinear World Models, CEM achieves competitive success rates but incurs substantially greater planning cost than GN, since it does not exploit the available structure. The real-time variant iCEM reduces this computational cost, but its performance deteriorates on the more challenging environments, particularly Cube. For this reason, iCEM is primarily used in the real-time experiments of Section~\ref{sec:experiments}, where rapid replanning is required.

\begin{figure}[]
    \centering

    \begin{subfigure}{0.49\textwidth}
        \centering
        \begin{subfigure}{0.48\textwidth}
            \centering
            \includegraphics[width=\linewidth]{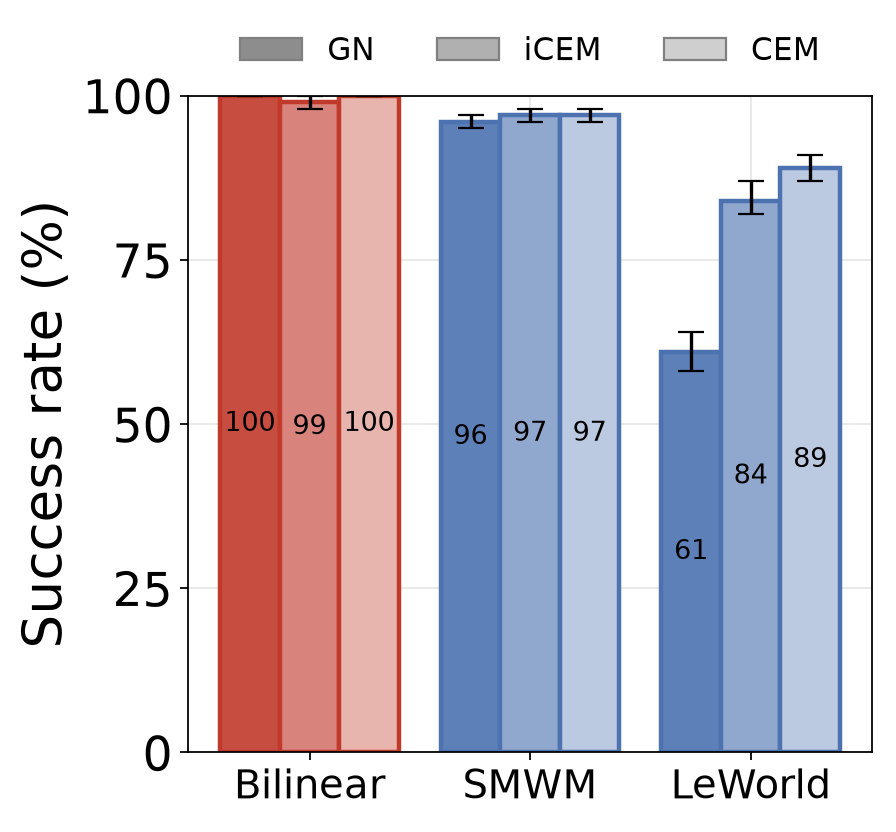}
        \end{subfigure}
        \hfill
        \begin{subfigure}{0.48\textwidth}
            \centering
            \includegraphics[width=\linewidth]{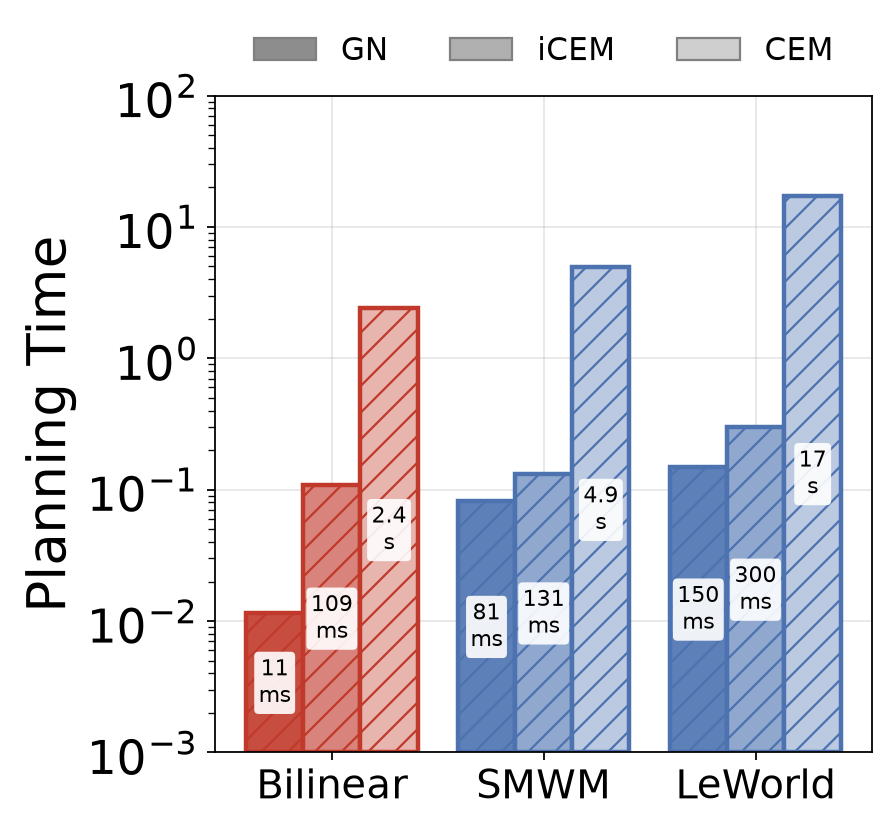}
        \end{subfigure}
        \caption{TwoRoom}
    \end{subfigure}
    \hfill
    \begin{subfigure}{0.49\textwidth}
        \centering
        \begin{subfigure}{0.48\textwidth}
            \centering
            \includegraphics[width=\linewidth]{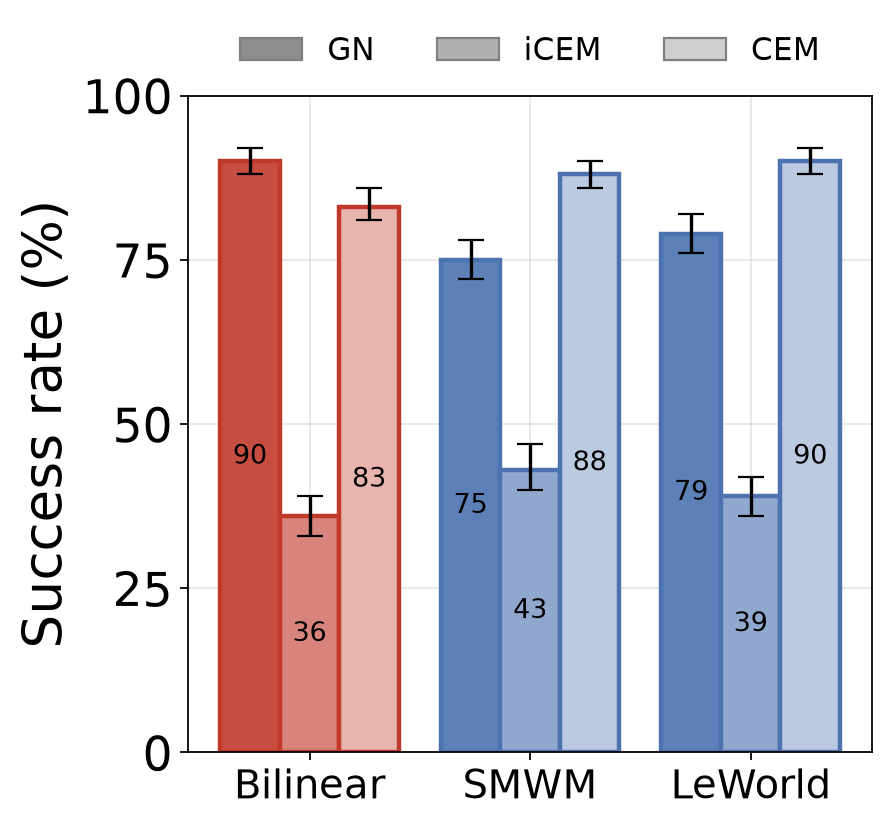}
        \end{subfigure}
        \hfill
        \begin{subfigure}{0.48\textwidth}
            \centering
            \includegraphics[width=\linewidth]{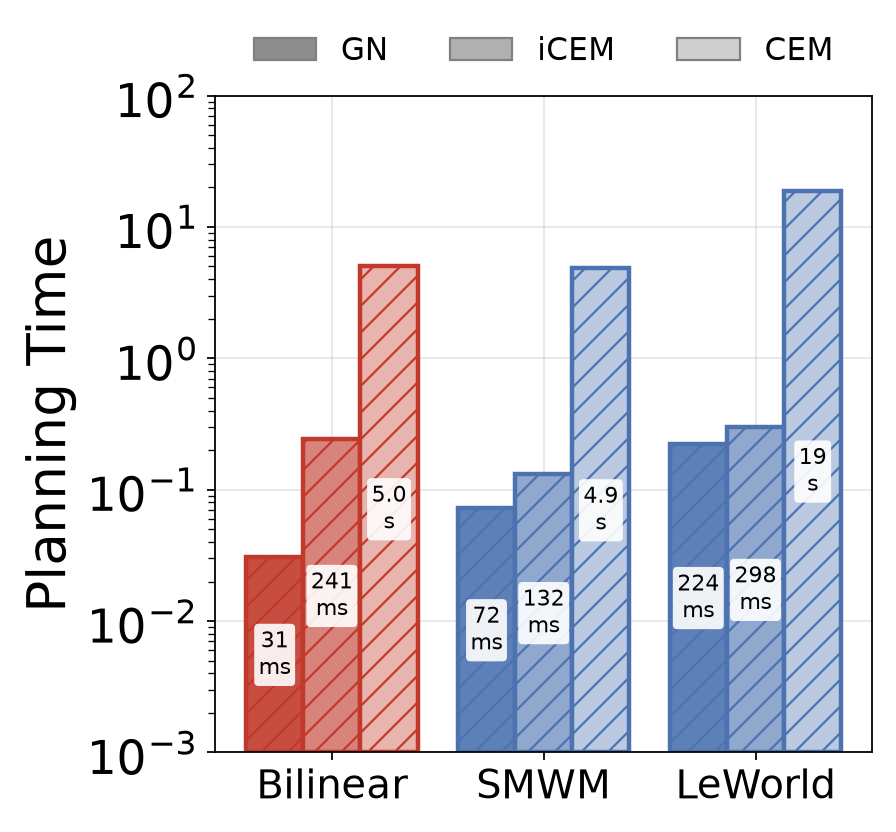}
        \end{subfigure}
        \caption{PushT}
    \end{subfigure}

    \vspace{0.8em}

    \begin{subfigure}{0.49\textwidth}
        \centering
        \begin{subfigure}{0.48\textwidth}
            \centering
            \includegraphics[width=\linewidth]{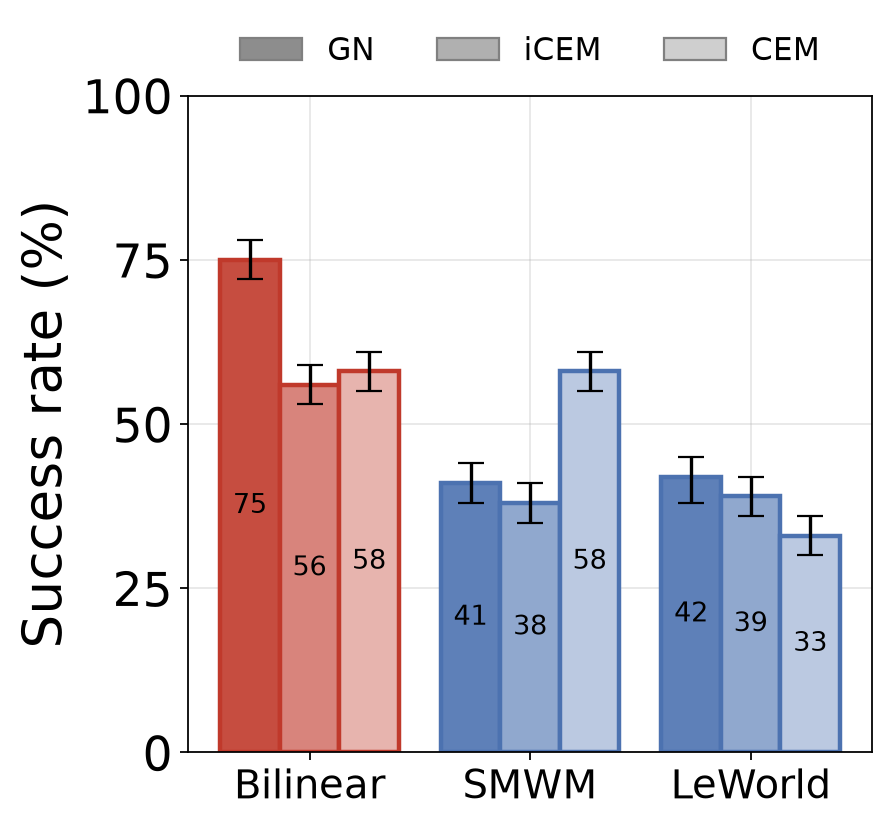}
        \end{subfigure}
        \hfill
        \begin{subfigure}{0.48\textwidth}
            \centering
            \includegraphics[width=\linewidth]{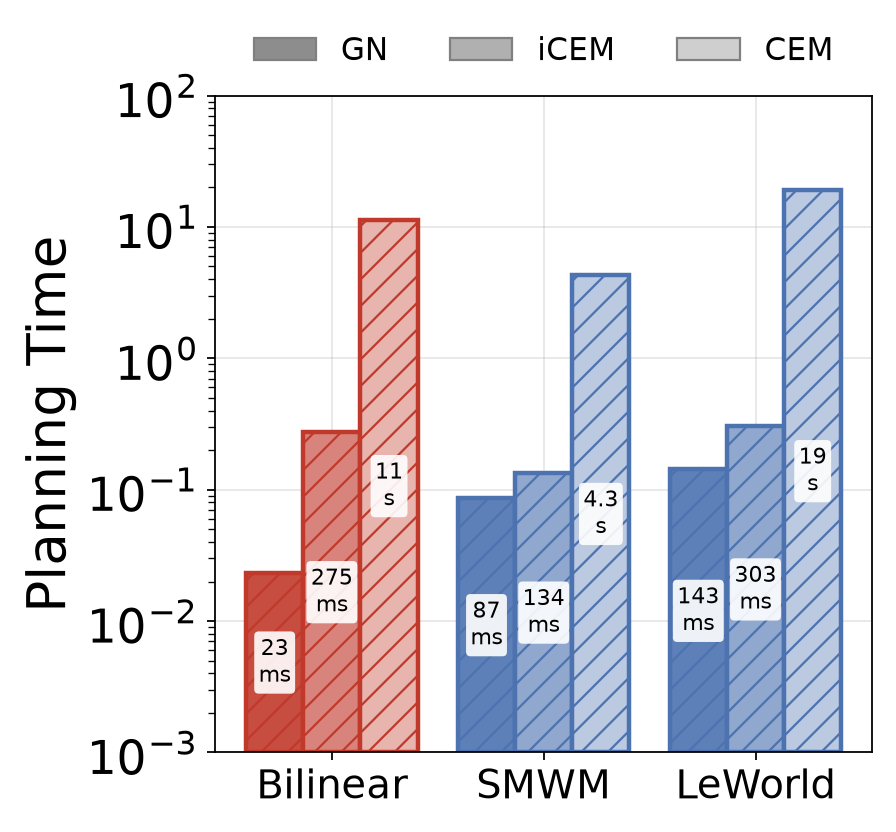}
        \end{subfigure}
        \caption{Reacher}
    \end{subfigure}
    \hfill
    \begin{subfigure}{0.49\textwidth}
        \centering
        \begin{subfigure}{0.48\textwidth}
            \centering
            \includegraphics[width=\linewidth]{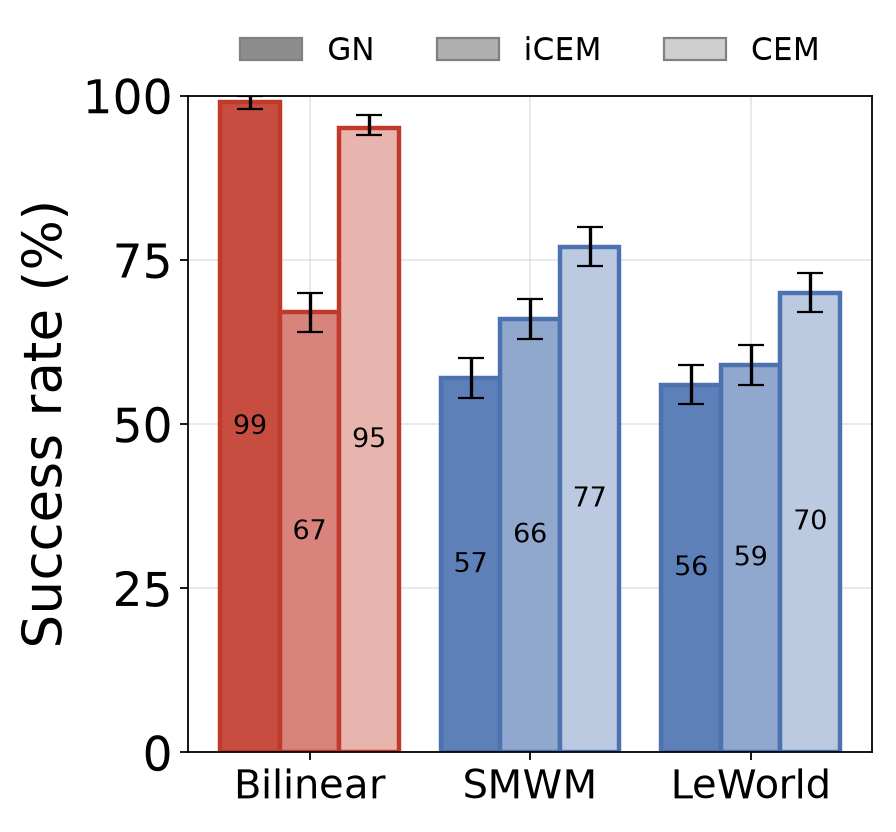}
        \end{subfigure}
        \hfill
        \begin{subfigure}{0.48\textwidth}
            \centering
            \includegraphics[width=\linewidth]{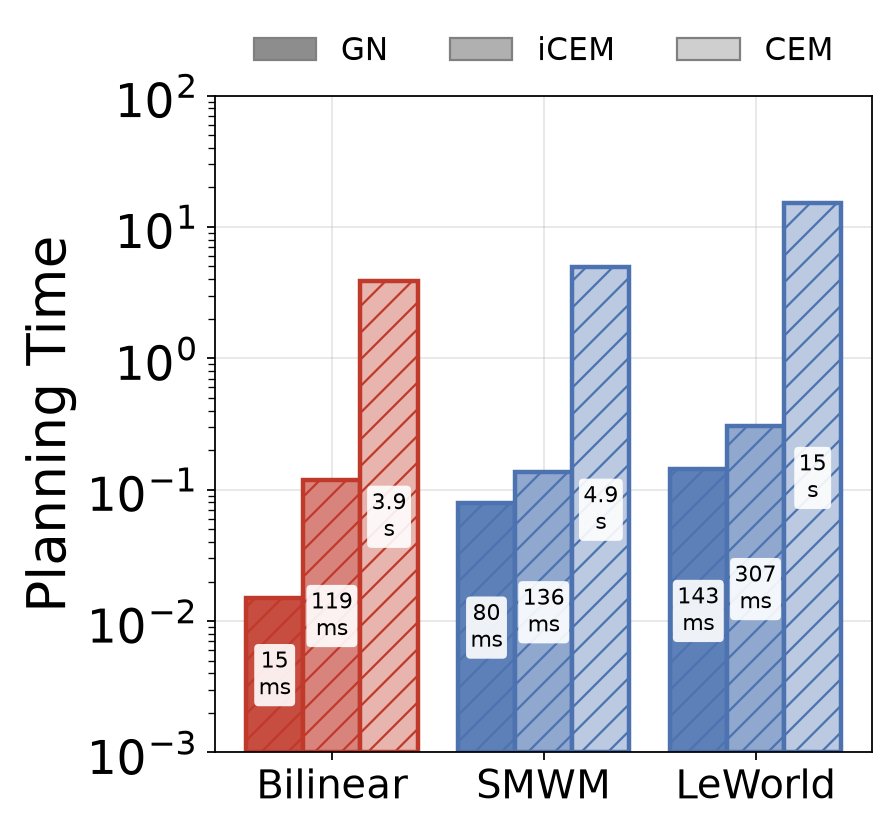}
        \end{subfigure}
        \caption{Cube}
    \end{subfigure}

    \caption{\textbf{Controller performance across world models.} Success rate and planning time are reported for CEM, iCEM, and Gauss--Newton (GN) applied to Bilinear World Models, SMWM, and LeWorld. GN is particularly effective when paired with Bilinear World Models, where it can exploit the explicit structure of the latent dynamics. CEM remains applicable across all architectures but incurs substantially higher planning cost, while iCEM reduces computation at the expense of degraded performance on the more challenging tasks.}
    \label{app:fig:comparison}
\end{figure}

\subsection{Geometry of the learned latent representations}
\label{app:sec:pca}

World models jointly learn representations of observations and models of their dynamics. Although the observations are high dimensional, the underlying physical systems evolve on substantially lower-dimensional state spaces. We therefore further examine the geometry of the learned latent representations using principal component analysis (PCA).

Figure~\ref{app:fig:surfaces} shows the latent states projected onto their first three principal components. Across the environments, the learned representations organize into smooth, low-dimensional structures rather than filling the ambient latent space. For TwoRoom, PushT, and Reacher, the dominant variation is largely captured by a two-dimensional surface, whereas Cube exhibits a richer geometry that is not fully represented by a three-dimensional projection.

Interestingly, the apparent dimensionality of these structures is closely related to the dimensionality of the corresponding control spaces. The simpler environments have two-dimensional action spaces and exhibit approximately two-dimensional latent geometry, while Cube has a higher-dimensional action space and correspondingly more complex latent structure. While PCA alone does not establish the exact intrinsic dimension of the learned representation, these results suggest that the latent space organizes around a compact set of degrees of freedom that are relevant for control.

\begin{figure}[p]
    \centering

    \begin{subfigure}{\linewidth}
        \centering
        \includegraphics[width=\linewidth]{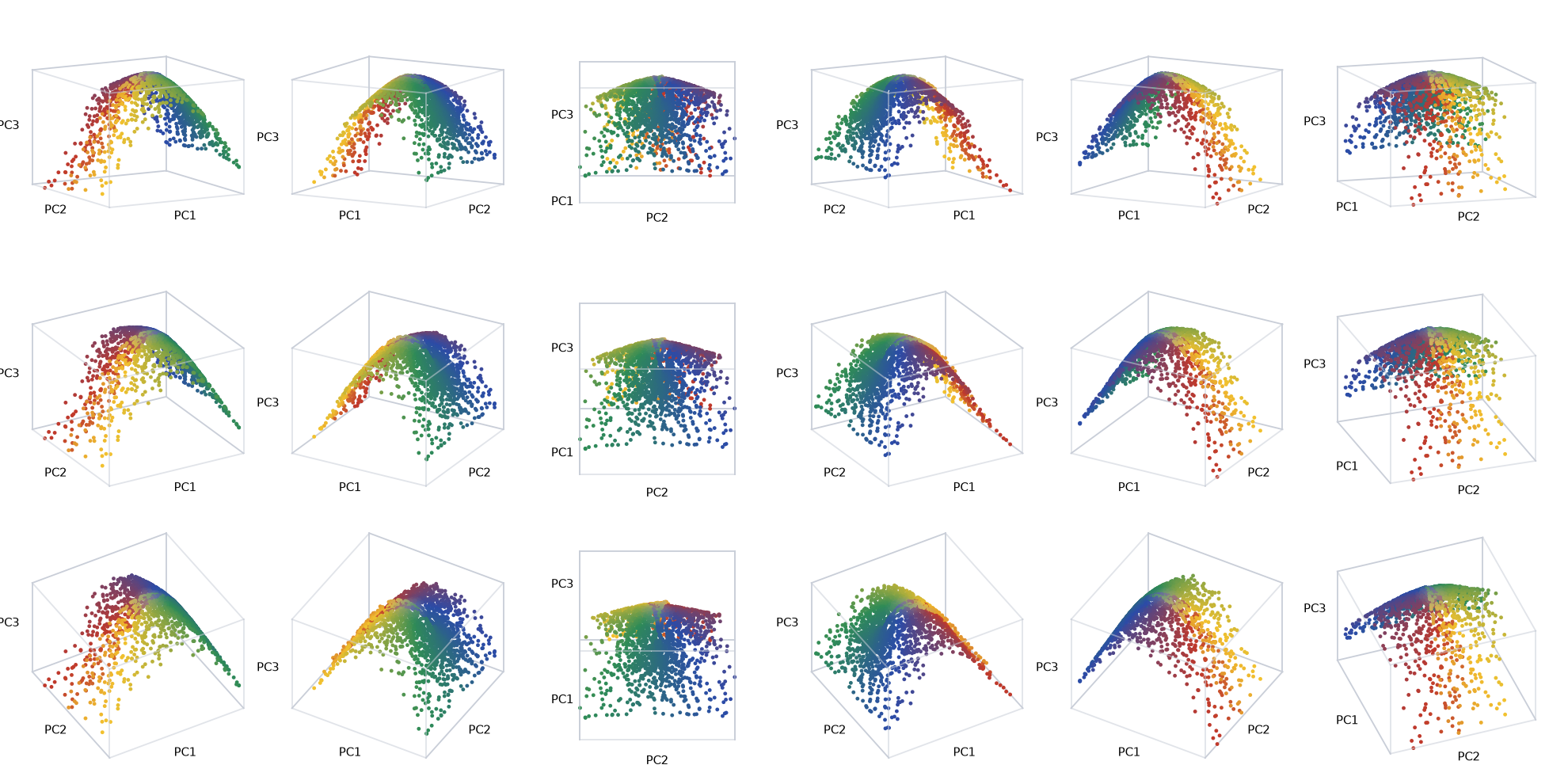}
        \caption{TwoRoom}
        \label{fig:pca3d_tworoom}
    \end{subfigure}

    \vspace{0.5em}

    \begin{subfigure}{\linewidth}
        \centering
        \includegraphics[width=\linewidth]{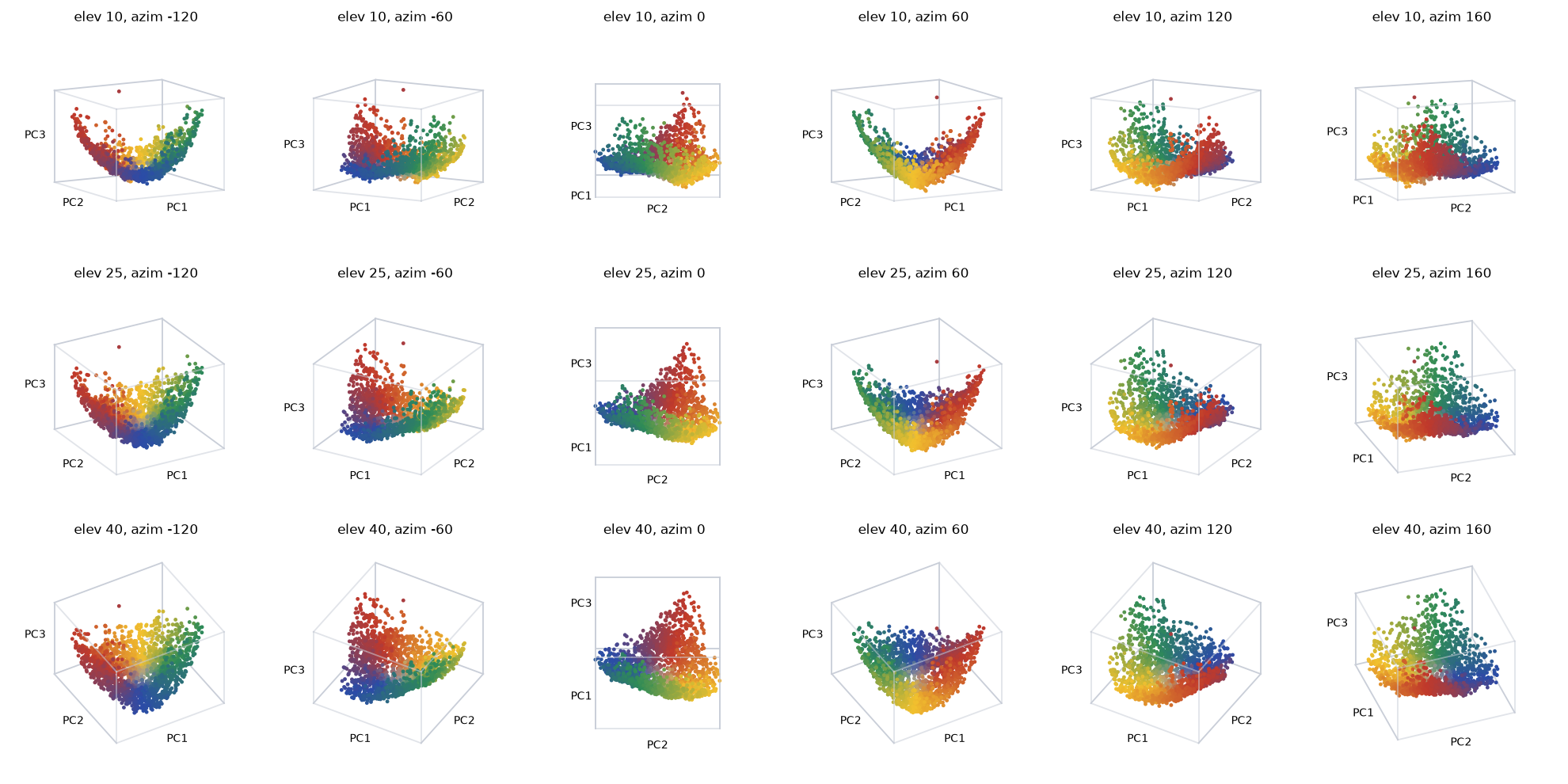}
        \caption{PushT}
        \label{fig:pca3d_pusht}
    \end{subfigure}
\end{figure}

\begin{figure}[p]
    \ContinuedFloat
    \centering

    \begin{subfigure}{\linewidth}
        \centering
        \includegraphics[width=\linewidth]{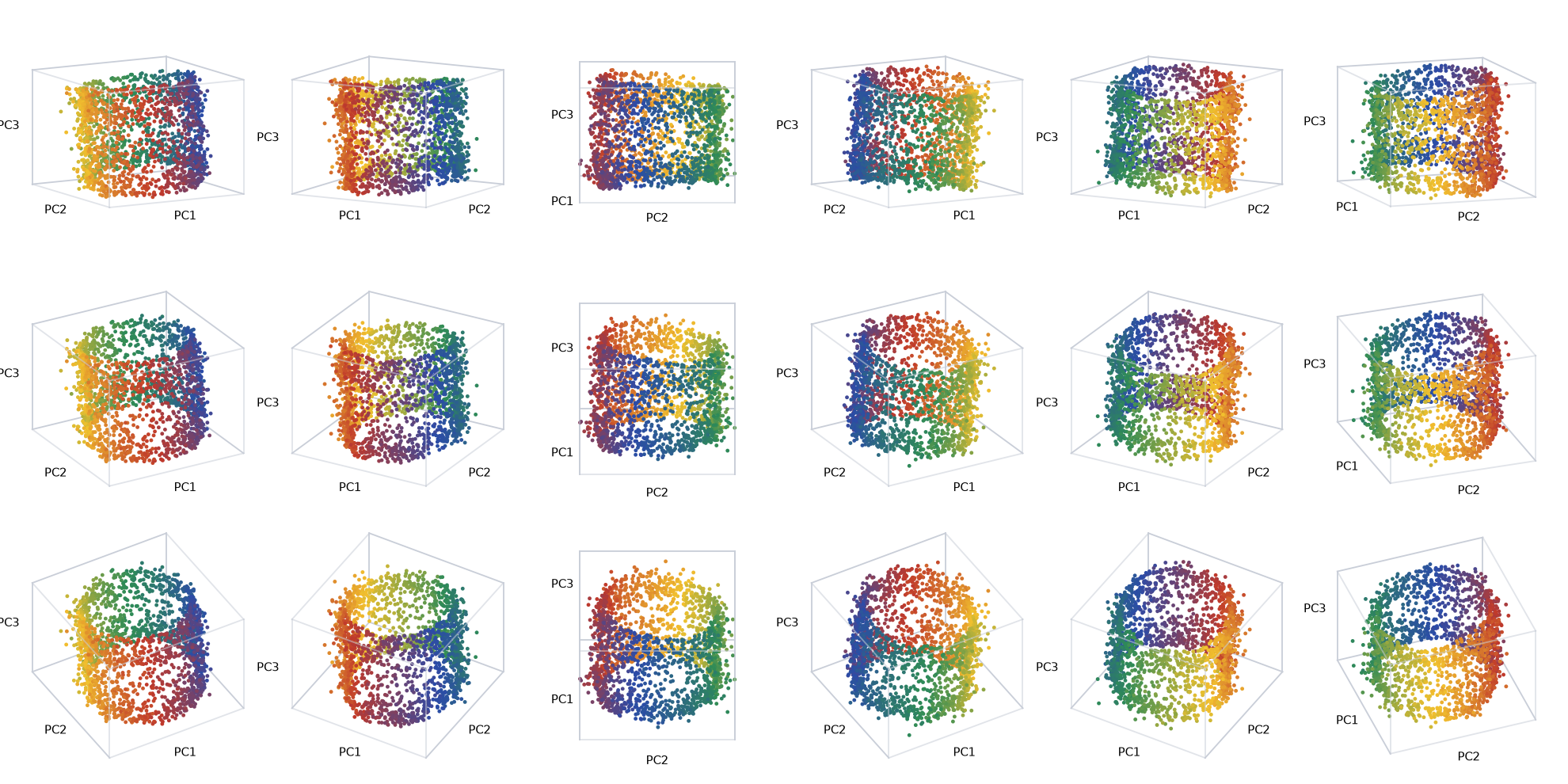}
        \caption{Reacher}
        \label{fig:pca3d_reacher}
    \end{subfigure}

    \vspace{0.5em}

    \begin{subfigure}{\linewidth}
        \centering
        \includegraphics[width=\linewidth]{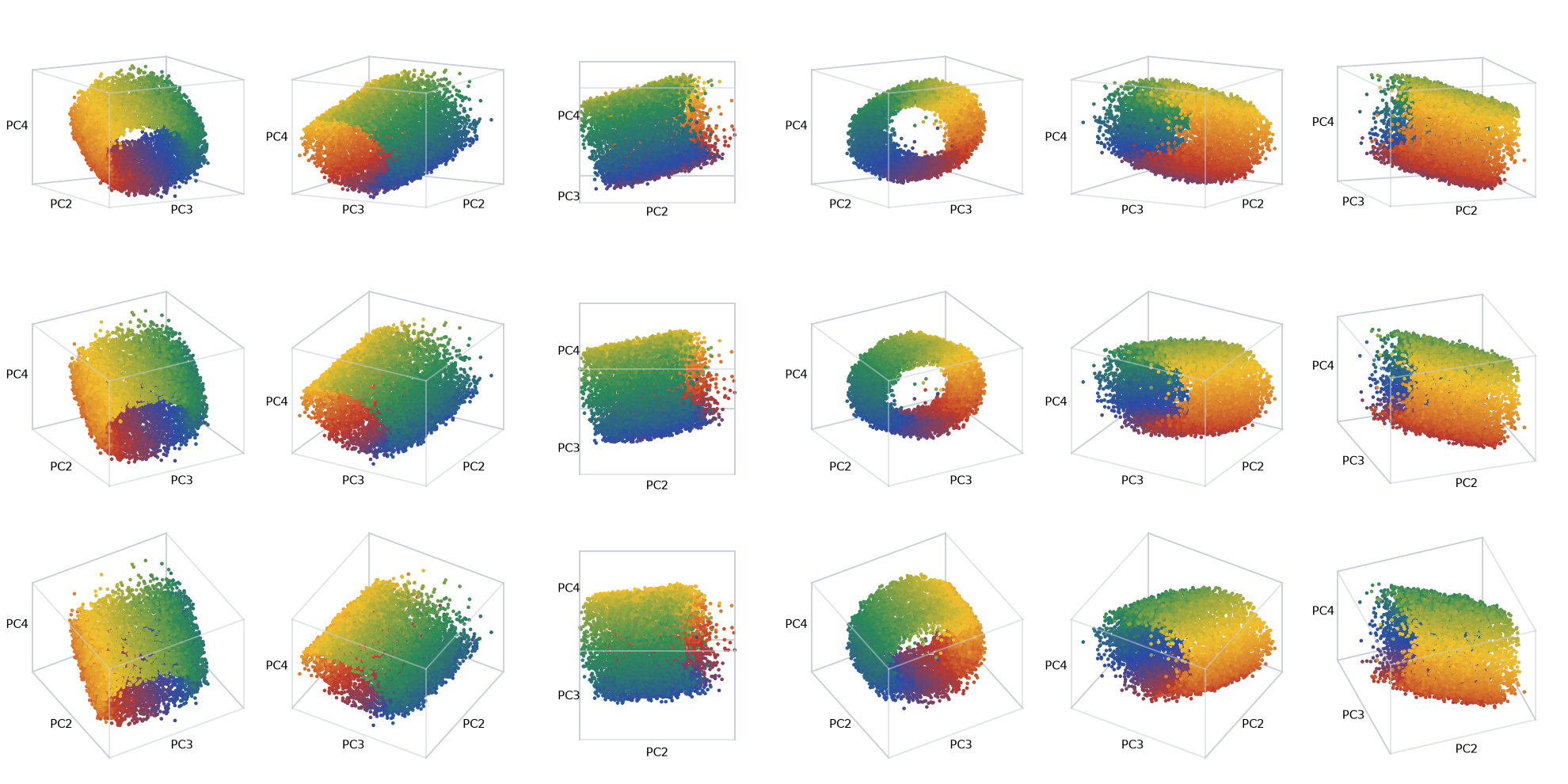}
        \caption{Cube}
        \label{fig:pca3d_cube}
    \end{subfigure}

    \caption{\textbf{PCA visualization of the learned latent representations.} Latent states are projected onto their first three principal components. TwoRoom, PushT, and Reacher exhibit approximately low-dimensional surface structure, while Cube displays a richer geometry that is not fully captured by a three-dimensional projection.}
    \label{app:fig:surfaces}
\end{figure}